\def\eqref#1{equation~\ref{#1}}
\def\1{\bm{1}}
\DeclareMathAlphabet{\mathsfit}{\encodingdefault}{\sfdefault}{m}{sl}
\SetMathAlphabet{\mathsfit}{bold}{\encodingdefault}{\sfdefault}{bx}{n}
\newcommand{\KL}{D_{\mathrm{KL}}}
\definecolor{codegreen}{rgb}{0,0.6,0}
\definecolor{codegray}{rgb}{0.5,0.5,0.5}
\definecolor{codepurple}{rgb}{0.58,0,0.82}
\definecolor{backcolour}{rgb}{0.95,0.95,0.92}
\lstdefinestyle{mystyle}{
    backgroundcolor=\color{backcolour},   
    commentstyle=\color{codegreen},
    keywordstyle=\color{magenta},
    numberstyle=\tiny\color{codegray},
    stringstyle=\color{codepurple},
    basicstyle=\ttfamily\footnotesize,
    breakatwhitespace=false,         
    breaklines=true,                 
    captionpos=b,                    
    keepspaces=true,                 
    numbers=left,                    
    numbersep=5pt,                  
    showspaces=false,                
    showstringspaces=false,
    showtabs=false,                  
    tabsize=2
}
\theoremstyle{plain}
\theoremstyle{definition}
\theoremstyle{remark}
\definecolor{gne}{HTML}{1F77B4} 
\definecolor{attr}{HTML}{1f77b4}
\definecolor{repl}{HTML}{ff7f0e}
\newcommand{\tsne}{\ensuremath{t}\text{-SNE}}
\newcommand{\tfdp}{\ensuremath{t}\text{-FDP}}
\newcommand{\drgraph}{DRGraph}
\newcommand{\sgtsnepi}{SGt\-SNEpi}
\newcommand{\hertie}{Hertie AI, University of Tübingen, Germany}
\newcommand{\aarhus}{Department of Computer Science, Aarhus University, Denmark}
\title{Node Embeddings via Neighbor Embeddings}
\author{\name Jan Niklas Böhm$\,^+$ \email mail@jnboehm.com \\
      \addr \hertie
      \AND
      \name Marius Keute$\,^+$ \\
      \addr \hertie
      \AND
      \name Alica Guzmán \\
      \addr \hertie
      \AND
      \name Sebastian Damrich \email sebastian.damrich@uni-tuebingen.de\\
      \addr \hertie
      \AND
      \name Andrew Draganov \email draganovandrew@gmail.com\\
      \addr \aarhus
      \AND
      \name Dmitry Kobak \email dmitry.kobak@uni-tuebingen.de \\
      \addr \hertie
}
\begin{document}

\maketitle

\begin{abstract}
Node embeddings are a paradigm in non-parametric graph representation learning, where graph nodes are embedded into a given vector space to enable downstream processing. State-of-the-art node-embedding algorithms, such as DeepWalk and node2vec, are based on random-walk notions of node similarity and on contrastive learning. In this work, we introduce the graph neighbor-embedding (graph NE) framework that directly pulls together embedding vectors of adjacent nodes without relying on any random walks. We show that graph NE strongly outperforms state-of-the-art node-embedding algorithms in terms of local structure preservation. Furthermore, we apply graph NE to the 2D node-embedding problem, obtaining graph $t$-SNE layouts that also outperform existing graph-layout algorithms.
\end{abstract}

\begin{figure*}[b!]
\centering\includegraphics[width=\textwidth]{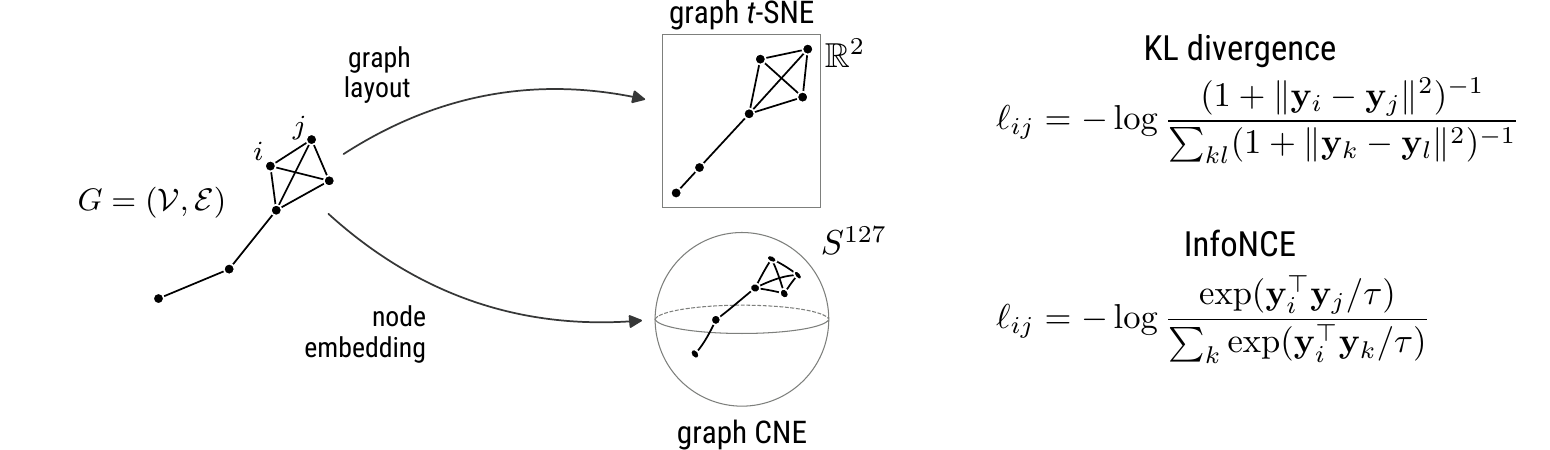}
    \caption{Graph $G=(\mathcal V, \mathcal E)$ 
    embedded into $S^{127}$ and $\mathbb R^2$ with graph NE. \textcolor{attr}{Blue} denotes the attractive force between neighboring nodes $i$ and $j$ with $(i,j)\in\mathcal{E}$, \textcolor{repl}{orange} corresponds to repulsive forces between all points.}
    \label{fig:sketch}
\end{figure*}

\section{Introduction}
\label{sec:intro}

Many real-world datasets, ranging from molecule structures to citation networks, come in the form of graphs. A graph $G$ is an abstract object consisting of a set of nodes $\mathcal V$ and a set of edges $\mathcal E$ between them; the nodes do not inherently belong to any specific metric space. Therefore, the field of graph representation learning has emerged with the goal of embedding the nodes into a metric space, such as $\mathbb R^d$, so that the graph structure (neighborhoods, graph distances, etc.) is well-preserved. 
In this paper, we only consider \textit{non-parametric} approaches that do not use any node features.

Popular \textit{node-embedding} methods like DeepWalk \citep{perozzi2014deepwalk} and node2vec \citep{grover2016node2vec} are based on contrastive learning and random-walk notions of node similarity, reducing the node-embedding problem to a word-embedding problem and then relying on the word2vec algorithm \citep{mikolov2013distributed} for optimization. At the same time, node embeddings into $\mathbb R^2$ for visualization purposes\,---\,known as \textit{graph layouts}\,---\,are typically obtained by algorithms that simply pull together neighboring (i.e. connected by an edge) nodes, traditionally using spring models \citep{fruchterman1991graph}. In the context of dimensionality reduction, the idea of pulling neighbors together has become known as \textit{neighbor embeddings} through methods like $t$-SNE~\citep{van2008visualizing} and UMAP~\citep{mcinnes2018umap}. This raises the question: \textit{Can such neighbor-embedding approaches be used for generic node-embedding problems?}

In this work, we show that neighbor-embedding methods are remarkably effective for node-embedding problems
and introduce a framework called \textit{graph neighbor embeddings (graph NE)} (Figure~\ref{fig:sketch}). Our work builds on recent literature which allows to effectively optimize neighbor embeddings in high-dimensional embedding spaces \citep{mcinnes2018umap,damrich2022contrastive}. We show that graph NE outperforms DeepWalk and node2vec in terms of local structure preservation, while being conceptually simpler (no random walks are needed) and without requiring costly hyperparameter tuning. Furthermore, we show that graph NE can also be applied for 2D node embeddings (Figure~\ref{fig:sketch}), outperforming existing graph-layout methods. In short, our results demonstrate that neighbor embeddings are a powerful approach to graph representation learning that beats state-of-the-art node-embedding algorithms.

\section{Related work}
\label{sec:related}

\paragraph{Non-parametric node embeddings} 

The popular DeepWalk \citep{perozzi2014deepwalk} and node2vec \citep{grover2016node2vec} algorithms optimize node placement in a high-dimensional target space based on random walks over a graph. These walks treat nodes as analogous to words and random-walk paths as sentences, enabling the application of word-embedding techniques to learn the representation. Specifically, DeepWalk achieves this by performing random walks from each starting node and then using the word2vec algorithm \citep{mikolov2013distributed} to ensure that nodes which often co-occur in these random walks are represented near one another in the embedding space. The node2vec algorithm similarly obtains node embeddings by giving graph traversals to the word2vec algorithm, but it differs from DeepWalk by defining two parameters which control the depth-first vs. breadth-first nature of the random walk. These parameters ($p$ and $q$) provide an additional level of control over the community structure uncovered by the walks, with DeepWalk being a specific instantiation of node2vec when these parameters are both set to $1$.

Both DeepWalk and node2vec have been widely adopted for graph-based machine learning applications, including classification and link-prediction tasks \citep{khosla2019comparative}. Although connections have been drawn between word2vec and contrastive learning \citep{arora2019theoretical}, we emphasize that the DeepWalk and node2vec algorithms are often regarded as separate from standard contrastive techniques \citep{grohe2020word2vec}. 

Instead of optimizing the embedding coordinates freely, \citet{alvarez2023beyond} found improved performance after constraining the embedding coordinates in DeepWalk based on the graph connectivity structure. Their approach, called IGEL, allows to embed new, previously unobserved nodes, which standard non-parametric methods (including ours) do not allow. Their approach can in principle be combined with any gradient-descent-based node-embedding method, including ours.

\paragraph{Parametric node embeddings and node-level graph contrastive learning}

Our paper is about \textit{non-parametric} embeddings that only use the structure of the graph $G=(\mathcal V, \mathcal E)$. In contrast, \textit{parametric} graph contrastive learning (GCL) methods use node feature vectors and employ a neural network, usually a graph convolutional network \citep[GCN;][]{kipf2017semi}, to transform features into embedding vectors. 

The basic principle behind contrastive learning is to learn a data representation by contrasting pairs of observations that are similar to each other (positive pairs) with those that are dissimilar to each other (negative pairs). In computer vision, positive pairs are generated via data augmentation, e.g. in SimCLR \citep{chen2020simple}. GCL can be graph-level or node-level, depending on whether representations are obtained for a set of graphs or for the set of nodes of a single graph. Many graph-level \citep[e.g.][]{you2020graph} and node-level GCL algorithms  \citep{velickovic2019deep,zhu2020deep,hassani2020contrastive,thakoor2021large,zhang2021canonical,zhu2021graph} are also based on graph augmentations, such as node dropping or edge perturbation. A general problem with domain-agnostic graph augmentations is that they can have unpredictable effects on graph semantics \citep{trivedi2022augmentations}. This motivated development of augmentation-free node-level GCL methods, where positive pairs are pairs of nodes that are located close to each other in terms of graph distance \citep{lee2022augmentation,wang2022augmentation,zhang2022localized}. Recent work argued that GCL methods effectively pull connected nodes together, sometimes explicitly through their loss function, but also implicitly through the GCN architecture \citep{trivedi2022augmentations, wang2023message, guo2023architecture}. A GCN can also optimize a neighbor-embedding loss on node features and/or on shortest-path distances \citep{leow2019graphtsne}.

Note that all algorithms mentioned in this paragraph are parametric and fundamentally depend on node features. In contrast, our proposed graph NE algorithm is non-parametric and operates exclusively on graph structure without requiring node features. Throughout this paper, we therefore restrict our comparisons to other non-parametric methods.

\paragraph{Graph layouts}

Graph-layout algorithms have traditionally been based on spring models, where every connected pair of nodes feels a distance-dependent attractive force $F_a$ and all pairs of nodes feel a distance-dependent repulsive force $F_r$ (\textit{force-directed graph layouts}). Many algorithms can be written as $F_a=d_{ij}^a$ and $F_r=d_{ij}^r$ \citep{noack2007energy}, where $d_{ij}^a$ (resp. $d_{ij}^r$) is the embedding distance between nodes $i$ and $j$ raised to the $a$-th (resp. $r$-th) power. For example, the Fruchterman--Reingold algorithm uses $a=2, r=-1$ \citep{fruchterman1991graph}; ForceAtlas2 uses $a=1$, $r=-1$ \citep{jacomy2014forceatlas2}; LinLog uses $a=0, r=-1$ \citep{noack2007energy}. Efficient implementations can be based on Barnes--Hut approximation of the repulsive forces, as in SFDP \citep{hu2005efficient}. ForceAtlas2 has been shown to be related to neighbor embeddings \citep{bohm2022attraction}.

Several recent graph-layout algorithms have been inspired by neighbor embeddings, and in particular by $t$-SNE \citep{van2008visualizing}. tsNET \citep{kruiger2017graph} applied a modified version of $t$-SNE to the pairwise shortest-path distances between all nodes. DRGraph \citep{zhu2020drgraph} accelerated tsNET by using negative sampling \citep{mikolov2013distributed}. $t$-FDP \citep{zhong2023force} suggested custom $F_a$ and $F_r$ forces inspired by $t$-SNE and adopted the interpolation-based approximation of \citet{linderman2019fast}. 
\sgtsnepi\ \citep{pitsianis2019spaceland} is the closest method to the 2D version of  our proposed graph NE algorithm. It applies $t$-SNE optimization to affinities derived from the graph $G$, but derives these affinities in a more complex way than we do, and with additional hyperparameters (Section~\ref{sec:methods-graph-t-sne}).

There is a separate set of methods which produce graph embeddings via classical dimensionality reduction techniques. Some of these, such as Laplacian Eigenmaps \citep{belkin2003laplacian} and Diffusion Maps \citep{coifman2006diffusion}, can be applied directly to graphs (and amount to eigendecomposition of the graph Laplacian). We use Laplacian Eigenmaps in our comparisons as a representative algorithm from this family. Other approaches employ variants of multidimensional scaling on graph-derived distances \citep{gansner2012maxent, miller2023balancing, zhang2023effective}.

\section{Background: Neighbor-embedding framework}
\label{sec:background}

\subsection{Neighbor embeddings}
\label{sec:background-ne}

Neighbor embeddings are a family of dimensionality-reduction methods aiming to embed $n$ observations from some high-dimensional metric space $\mathcal{X}$ into a lower-dimensional (usually two-dimensional) Euclidean space $\mathbb{R}^d$, such that neighborhood relationships between observations are preserved in the embedding space. We denote the embedding vectors as $\mathbf y_i \in \mathbb{R}^d$. 

One of the most popular neighbor embedding methods, $t$-distributed stochastic neighbor embedding ($t$-SNE; \citealp{van2008visualizing}), is an extension of the earlier SNE \citep{hinton2002stochastic}. $t$-SNE minimizes the Kullback--Leibler divergence between the high-dimensional and low-dimensional \textit{affinities} 
$p_{ij}$ and $q_{ij}$:
\begin{equation}
    \mathcal{L} = \sum_{ij} p_{ij} \log \frac{p_{ij}}{q_{ij}} = \mathrm{const} - \sum_{ij} p_{ij} \log q_{ij}.
\end{equation}
Both affinity matrices are defined to be symmetric, positive, and to sum to $1$. The high-dimensional affinities $\mathbf P$ are  computed using adaptive Gaussian kernels whose mass is concentrated on nearest neighbors.
Low-dimensional affinities $\mathbf Q$ are defined in $t$-SNE using a $t$-distribution kernel with one degree of freedom, also known as the Cauchy kernel:
\begin{equation}
    q_{ij} =  \frac{(1+\|\mathbf y_i-\mathbf y_j\|^2)^{-1}}{\sum_{k\neq l} (1+\|\mathbf y_l-\mathbf y_k\|^2)^{-1}}.
    \label{eq:cauchy-aff}
\end{equation}

In practice, $t$-SNE optimization can be accelerated by an approximation of the repulsive force field based on the Barnes--Hut algorithm \citep{van2014accelerating, yang2013scalable},  on  interpolation \citep{linderman2019fast}, or on sampling \citep{artemenkov2020ncvis, damrich2022contrastive, draganov2023actup, yang2023stochastic}.

\subsection{Contrastive neighbor embeddings}
\label{sec:background-cne}

The contrastive neighbor-embedding (CNE) algorithm \citep{damrich2022contrastive} is a flexible dimensionality-reduction framework that replaces $t$-SNE's Kullback--Leibler divergence loss with contrastive losses, such as the InfoNCE loss \citep{jozefowicz2016exploring,oord2018representation}. This loss function is called \textit{contrastive} because it is based on contrasting pairs of $k$-nearest neighbors and non-neighbors in the same mini-batch, and does not require a global normalization like in \cref{eq:cauchy-aff}. As a result, the runtime of CNE scales like $\mathcal O(nd)$ with the number of points $n$ and the embedding dimensionality $d$, unlike other existing $t$-SNE implementations that scale like $\mathcal O(n^2d)$ \citep{van2008visualizing} or $\mathcal O(n2^d)$ \citep{linderman2019fast}. This enables CNE to optimize high-dimensional outputs (large~$d$). CNE with the InfoNCE loss approximates $t$-SNE~(\citealp{damrich2022contrastive, ma2018noise}; see also Section~\ref{sec:cne_tsne}).

The InfoNCE loss is defined for one pair of $k$-nearest neighbors $ij$ (\textit{positive pair}) with affinity $p_{ij}$ as
\begin{equation}
    \ell (i,j) = -p_{ij} \log \frac{w_{ij}}{w_{ij} + \sum_{k=1}^m w_{ik}},
    \label{eq:infonce}
\end{equation}
where $w_{ij}$ are \textit{non-normalized} low-dimensional affinities standing in for the normalized affinities $q_{ij}$ above. The sum in the denominator is over $m$ \textit{negative pairs} $ik$ where $k$ can be drawn from all points in the same mini-batch apart from $i$ and $j$. One mini-batch consists of $b$ pairs of neighbors, and hence contains $2b$ points. Therefore, for a given batch size $b$, the maximal value of $m$ is $2b-2$. The larger the number of negative samples $m$, the better is the approximation to $t$-SNE \citep{damrich2022contrastive}. The InfoNCE loss aims to make $w_{ij}$ large, i.e. place embeddings $\mathbf y_i$ and $\mathbf y_j$ nearby, if $ij$ is a positive pair, and small if it is a negative one. 

The $w_{ij}$ affinities do not need to be normalized. When embedding into $\mathbb R^2$, they can just be defined as 
\begin{equation}
    w_{ij} = (1 + \|\mathbf y_i-\mathbf y_j\|^2)^{-1}.
\end{equation}
When using a high-dimensional embedding space, e.g. $d=128$ instead of $d=2$, embedding vectors are usually projected to lie on the unit sphere. 
For points on the unit sphere, the cosine distance and the squared Euclidean distance differ only by a constant multiplicative factor, making the following definitions of $w_{ij}$ equivalent:
\begin{align}
    w_{ij} = \exp\mathopen{}\bigg(\frac{\mathbf y_i^\top \mathbf y_j}{\|\mathbf y_i\|\cdot \|\mathbf y_j\|\cdot \tau}\bigg)\mathclose{} 
    = \mathrm{const} \cdot \exp\mathopen{}\Big(-\Big\|\frac{\mathbf y_i}{\|\mathbf y_i\|} - \frac{\mathbf y_j}{\|\mathbf y_j\|}\Big\|^2\Big/(2\tau)\Big)\mathclose{},
    \label{eq:cne-cosine-aff}
\end{align}
where $\tau$ is called the \textit{temperature} (by default, $\tau=0.5$). 
Together with \cref{eq:infonce}, this gives the same loss function as in SimCLR \citep{chen2020simple}, a popular contrastive learning algorithm in computer vision. Note that instead of nearest neighbors, SimCLR uses pairs of augmented images as positive pairs.

\section{Graph NE: Applying the neighbor-embedding framework to graphs}
\label{sec:methods}

\subsection{General approach}
\label{ssec:general_approach}

Neighbor-embedding algorithms employ high-dimensional affinities with most $p_{ij}\approx 0$. This can be seen as a generalization of discrete nearest neighbors: if $p_{ij}$ is close to 0, then the points are effectively dissimilar. 
However, almost the same visualizations can be obtained using hard nearest neighbors, i.e. simply by normalizing the symmetric $k$NN graph adjacency matrix $\mathbf A$ directly~\citep{artemenkov2020ncvis, damrich2022contrastive}: 
\begin{equation}
\mathbf P = \mathbf A \Big/ \sum_{ij} A_{ij} \label{eq:knn_aff_cne}.
\end{equation}
Here, $\mathbf A$ has element $A_{ij}=1$ if $\mathbf x_j$ is within the $k$ nearest neighbors of $\mathbf x_i$ or vice versa. This is equivalent to simply leaving out $p_{ij}$ from \cref{eq:infonce}.

Thus, even though neighbor embeddings are usually not presented as such, they can be thought of as node-embedding algorithms, specifically applied to $k$NN graphs.
During optimization, neighboring nodes (sharing a $k$NN edge) feel attraction, whereas all nodes feel repulsion, arising through the normalization in \cref{eq:cauchy-aff,eq:infonce}.

This suggests a simple strategy, which we call \textit{graph neighbor embedding (graph NE)}, for applying the neighbor embedding framework to a general graph $G$: obtain affinities directly from $G$ instead of a $k$NN graph of some data, and then compute a non-parametric neighbor embedding on these affinities (Figure~\ref{fig:sketch}).

\subsection{High-dimensional node embeddings via graph NE}
\label{sec:methods-graph-cne}

Given an unweighted graph $G=(\mathcal V, \mathcal E)$, its adjacency matrix $\mathbf A$ has elements $A_{ij} = 1$ if $(i,j) \in \mathcal{E}$ and $A_{ij} = 0$ otherwise. Since all graphs considered in this study are undirected, the adjacency matrix is a binary, symmetric square $n\times n$ matrix. 
In order to convert the adjacency matrix into an affinity matrix suitable for neighbor embedding, we followed the simple normalization strategy in \cref{eq:knn_aff_cne}.
Then, graph NE optimizes the embedding using the contrastive InfoNCE loss function (through the CNE backend) to place neighbors close to each other in the embedding (Section~\ref{sec:background-cne}). We used the \texttt{cne} library \citep{damrich2022contrastive}.

For all experiments with CNE we used the output dimensionality $d=128$, following the DeepWalk paper, and the cosine distance (meaning the embedding vectors lie on a hypersphere, Equation~\ref{eq:cne-cosine-aff}). We set the batch size to $\min\{8192, |\mathcal{V}|/10\}$ (smaller graphs required smaller batch sizes for good convergence) and used full-batch repulsion ($m=2b-2$) for better local structure preservation \citep{damrich2022contrastive}. The number of epochs was set to $100$. We used the Adam optimizer \citep{kingma2014adam} with learning rate $0.001$. Graph~NE was initialized with 128-dimensional Diffusion Maps \citep{coifman2006diffusion}, although we saw almost no difference when using random initialization (\cref{fig:ablation}b,e). For this paper, we implemented in \texttt{cne} version~0.4.0 some of the API options shown in the code snippet below.

\begin{lstlisting}[language=Python]
from cne import CNE
C = CNE(
   loss_mode="infonce", temperature=0.05, parametric=False, embd_dim=128,
   metric="cosine", batch_size=8192, negative_samples="full-batch", optimizer="adam"
)
Y = C.fit_transform(graph=A, init="diffmaps")
\end{lstlisting}

Note that our method is conceptually much simpler than DeepWalk and node2vec. In both of these algorithms, random walks are used to implicitly estimate node similarity by their co-occurence, and then word2vec is employed to train the embedding. Furthermore, node2vec requires per-graph hyperparameter tuning so that its random-walk distribution appropriately models the input graph \citep{grover2016node2vec}. In our graph NE method, all nodes connected by an edge attract each other, requiring no random walks.

\subsection{Graph layouts via 2D graph NE}
\label{sec:methods-graph-t-sne}

A graph layout is a 2-dimensional node embedding. Therefore, we can apply graph NE in 2D to obtain graph layouts. Since the main purpose for graph layouts is visualization, we use the Cauchy similarity~(Equation~\ref{eq:cauchy-aff}). The embedding dimensionality $d=2$ allows us to use the KL divergence and \texttt{openTSNE} library~\citep{polivcar2019opentsne} with default parameters for optimization. It supports Diffusion Maps for initialization \citep{kobak2021initialization}, sets the learning rate to $n$ to achieve good convergence \citep{linderman2019clustering,belkina2019automated}, and employs the FIt-SNE algorithm \citep{linderman2019fast}. For this paper, we made some improvements to the spectral initialization in \texttt{openTSNE}.

In this setting, we found the row-normalization of the adjacency matrix to perform better:
\begin{equation}
    \mathbf P = \frac{\tilde{\mathbf A} + \tilde{\mathbf A}^\top}{2n}, \text{ where } \tilde A_{ij} =  A_{ij} \Big/ \sum_{k=1}^n A_{ik}.
    \label{eq:row_norm}
\end{equation}
Normalizing the adjacency matrix as in \cref{eq:knn_aff_cne} resulted in lower neighbor recalls and $k$NN accuracies, and in hedgehog-shaped embeddings with low-degree nodes pushed out to the periphery and dominating the embedding (Figure~\ref{fig:ablation}c,f).
Furthermore, we experimented with various initialization schemes and found that on our graphs, random initialization performed very similar to Diffusion Maps (\cref{fig:ablation}b,e). This setting of graph NE can also be called graph \tsne:

\begin{lstlisting}[language=Python]
from openTSNE import TSNE
from openTSNE.affinity import PrecomputedAffinities
P = A / A.sum(axis=1)
P = (P + P.T) / 2 / A.shape[0]
Y = TSNE(initialization="spectral").fit(affinities=PrecomputedAffinities(P))
\end{lstlisting}

In contrast to the simple \cref{eq:row_norm} that we use for 2D graph NE, the closely related \sgtsnepi\ method \citep{pitsianis2019spaceland} derives the affinity matrix~$\mathbf P$ from the adjacency matrix~$\mathbf A$ in a more complicated way \citep[Supplementary]{pitsianis2024sgtsnepi}. Non-zero elements $A_{ij}$ are first weighted by the Jaccard similarity of the sets of neighbors of nodes $i$ and $j$, then power-transformed to match a pre-specified row sum $\lambda$, and finally divided by $\lambda$ to yield $\tilde{\mathbf A}$. By default, $\lambda = 10$.

\subsection[Graph NE with CNE backend approximates t-SNE backend]{Graph NE with CNE backend approximates $t$-SNE backend}
\label{sec:cne_tsne}

Node embeddings computed via \texttt{cne} and via \texttt{openTSNE} backends have the same optima:

\begin{restatable}[label={thm:infonce}]{theorem}{InfoNCEthm}[adapted from \citealt{ma2018noise}]
Let  $p$ be a probability distribution over \mbox{$S=\{ij \mid 1 \leq i\neq j \leq n\}$,} so that for all pairs $ij$, there is a path $p_{ik_1}, \dots, p_{k_{l}j}$ with each step having positive probability. Let $w(\theta)$ be a family of non-negative functions $S\to \mathbb{R}_{\geq 0}$ parametrized by $\theta \in \Theta$ and symmetric in $i$ and $j$, meaning $w_{ij}(\theta) = w_{ji}(\theta)$. Let further $\xi$ be a probability distribution over $[n]\coloneqq \{1, \dots, n\}$ with full support. Suppose there is some $\theta^*\in \Theta$ and some $c> 0$ with $w(\theta^*)  / c= p$. Then, $\theta^*$ minimizes the loss
\begin{align}\label{eq:infonce-noise}
    \mathcal{L}^\mathrm{InfoNCE}(\theta) =- \mathbb{E}_{ij\sim p} \mathbb{E}_{k_1, \dots,  k_m \sim \xi} \log \frac{w_{ij}(\theta) / \xi_j}{w_{ij}(\theta) / \xi_j + \sum_{\alpha=1}^m w_{ik_\alpha}(\theta) / \xi_{k_\alpha}}
\end{align}
and for any other minimizer $\tilde{\theta}\in \Theta$ there exists $\tilde{c} > 0$ with $w(\tilde{\theta})  / \tilde{c}= p$.

In particular, the minima of  $\mathcal{L}^\mathrm{InfoNCE}$ correspond one-to-one to those of 
\begin{align}\label{eq:tsne-kl}
    \KL\big(p, w(\theta) / Z(\theta)\big) = - \mathbb{E}_{ij\sim p} \log\frac{w_{ij}(\theta)}{Z(\theta)}
\end{align}
where $Z(\theta) = \sum_{ij}w_{ij}(\theta)$.
\end{restatable}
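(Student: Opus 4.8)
The plan is to reduce the multi-sample InfoNCE objective to a family of single-anchor classification problems, solve each by a Bayes-optimal-classifier argument, and then glue the per-anchor solutions together using the symmetry of $w$ and the connectivity hypothesis on $p$. First I would condition on the anchor index $i$. Writing $p_i = \sum_j p_{ij}$ for the marginal, $p(j\mid i) = p_{ij}/p_i$ for the conditional, and introducing the importance-reweighted scores $u_{ij}(\theta) = w_{ij}(\theta)/\xi_j$, the objective factors as $\mathcal{L}^{\mathrm{InfoNCE}}(\theta) = \sum_i p_i\, \mathcal{L}_i(\theta)$, where the conditional loss $\mathcal{L}_i$ depends only on the $i$-th row $\{w_{ij}\}_j$ of $w$. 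The connectivity hypothesis guarantees $p_i > 0$ for every $i$ (the first step of any path out of $i$ has positive probability), so every row carries positive weight. Because the $\mathcal{L}_i$ involve disjoint blocks of the entries of $w$, I can bound $\mathcal{L}^{\mathrm{InfoNCE}}(\theta) \geq \sum_i p_i \min \mathcal{L}_i$, with equality iff every row is simultaneously optimal.

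The core step is to characterize the minimizers of a single $\mathcal{L}_i$. I would recognize $\mathcal{L}_i$ as the expected cross-entropy of the following symmetric task: place one sample from $p(\cdot\mid i)$ and $m$ i.i.d. noise samples from $\xi$ into $m+1$ exchangeable slots and predict which slot holds the positive. Since the negatives are exchangeable and the softmax is permutation-symmetric in its arguments, the expected InfoNCE loss (with the positive always in slot $0$) equals the expected classification loss in this symmetric task, whose Bayes-optimal posterior, by Gibbs' inequality and strict convexity of $-\log$, is proportional to $p(\cdot\mid i)/\xi$. Matching this posterior forces the softmax logits, hence $u_{ij} \propto p(j\mid i)/\xi_j$, i.e. $w_{ij} = \gamma_i\, p_{ij}$ for some $\gamma_i > 0$. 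Full support of $\xi$ ensures every index appears as a negative, so this pins down the whole row (including $w_{ij}=0$ wherever $p_{ij}=0$), and strict convexity makes the minimizer unique up to the per-row scale $\gamma_i$.

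Finally I would glue the rows together. At $\theta^*$ we have $w_{ij}(\theta^*) = c\, p_{ij}$ with a single constant $c$, so every row is optimal and the lower bound is attained, proving $\theta^*$ is a global minimizer. Conversely, any minimizer $\tilde\theta$ must attain the lower bound, forcing $w_{ij}(\tilde\theta) = \gamma_i p_{ij}$ row by row. Imposing the symmetry $w_{ij} = w_{ji}$ (and symmetry of $p$, which follows from $p = w(\theta^*)/c$) yields $\gamma_i = \gamma_j$ whenever $p_{ij} > 0$; the connectivity hypothesis then propagates this equality along positive-probability paths, so all $\gamma_i$ share a common value $\tilde c$ and $w(\tilde\theta) = \tilde c\, p$. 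For the last claim, I note that $\KL\big(p, w(\theta)/Z(\theta)\big)$ is nonnegative and vanishes exactly when $w(\theta)/Z(\theta) = p$, i.e. when $w(\theta) \propto p$, which is precisely the condition characterizing the InfoNCE minimizers; hence the two minimizer sets coincide.

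I expect the main obstacle to be the core classification step: justifying rigorously that the multi-negative InfoNCE loss coincides with the symmetric-slot cross-entropy and that its unique optimum (up to per-row scaling) is the reweighted conditional, while correctly handling the importance weights $1/\xi$ and the entries where $p_{ij} = 0$. The gluing argument is conceptually simple but is what upgrades the per-row proportionality to the global statement $w \propto p$, and it is exactly where the path/connectivity hypothesis is indispensable.
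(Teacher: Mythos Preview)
Your proposal is correct and follows essentially the same route as the paper: reduce the InfoNCE objective to cross-entropy terms whose minimizers are pinned down by Gibbs' inequality (the paper does this by decomposing over tuples $(i,k_0,\dots,k_m)$ with explicit $\alpha,\beta,\gamma$ functions and then picking $k_0=j,\,k_1=\dots=k_m=k$; you do it per anchor via the Bayes-optimal-classifier argument), and then glue the per-row proportionality constants using symmetry of $w$ and the path-connectivity of $p$ exactly as the paper does. One small slip that is not load-bearing: the rows of $w$ are \emph{not} disjoint (row $i$ and row $j$ share the entry $w_{ij}=w_{ji}$), but your lower bound $\mathcal L^{\mathrm{InfoNCE}}(\theta)\ge\sum_i p_i\min\mathcal L_i$ and its attainment at $\theta^*$ hold regardless, so the rest of the argument is unaffected.
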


The proof can be found in Appendix~\ref{sec:proof_thm1}. Consider this theorem in the case where $p$ is the uniform distribution of edges on a connected graph, parameters $\theta \in \mathbb{R}^{n\times d}$ are simply the embedding coordinates, and $w_{ij}(\theta)$ are the non-normalized low-dimensional affinities. Applying this theorem then shows that the InfoNCE loss (when using a uniform noise distribution $\xi$, Equation~\ref{eq:infonce-noise}) and the Kullback--Leibler divergence have the same minima.

This means that our graph NE framework unifies node embeddings and graph layouts. The main difference between graph NE with \texttt{cne} and \texttt{openTSNE} backends is the choice of optimization strategy for 128 and for 2 dimensions.

\section{Experimental setup}
\label{sec:setup}

\paragraph{Datasets}

We used eight publicly available graph datasets (Table~\ref{tab:datasets}). The first five datasets were retrieved from the Deep Graph Library \citep{wang2019deep}. The arXiv and MAG~dataset were retrieved from the Open Graph Benchmark \citep{hu2020open}. The MNIST~$k$NN dataset was obtained by computing the $k$NN graph with~$k=15$ on top of the 50 principal components of the MNIST digit dataset \citep{lecun1998gradient}. Each dataset was treated as an unweighted and undirected graph, where each node has a class label, used only for evaluation. We restricted ourselves to graphs with labeled nodes in order to use classification accuracy as one of the performance metrics. In all datasets we used only the largest connected component and excluded all self-loops if present, using \texttt{NetworkX} \citep{hagberg2008exploring} functions \texttt{connected\_components} and \texttt{selfloop\_edges}. We did not use any node features.

\begin{table}[ht] 
    \begin{minipage}[t]{0.6\textwidth}
        \centering
        \begin{tabular}{lrrcc}
            \toprule
            \vadjust{}\hfill Dataset\hfill\vadjust{} & \vadjust{}\hfill Nodes\hfill\vadjust{} & \vadjust{}\hfill Edges\hfill\vadjust{} & Classes & $E/N$ \\
            \midrule
            Citeseer & 2\thinspace120 & 7\thinspace358 & \phantom{0}\phantom{0}6 & \phantom{0}3.5 \\
            Cora & 2\thinspace485 & 10\thinspace138 & \phantom{0}\phantom{0}7 & \phantom{0}4.1 \\
            PubMed & 19\thinspace717 & 88\thinspace648 & \phantom{0}\phantom{0}3 & \phantom{0}4.5 \\
            Photo & 7\thinspace487 & 238\thinspace086 & \phantom{0}\phantom{0}8 & 31.8 \\
            Computer & 13\thinspace381 & 491\thinspace556 & \phantom{0}10 & 36.7 \\
            MNIST $k$NN & 70\thinspace000 & 1\thinspace501\thinspace392 & \phantom{0}10 & 21.4 \\
            arXiv & 169\thinspace343 & 2\thinspace315\thinspace598 & \phantom{0}40 & 13.7 \\
            MAG & 726\thinspace664 & 10\thinspace778\thinspace888 & 349 & 14.8 \\
            \bottomrule
        \end{tabular}
    \end{minipage}
    \quad \quad
    \begin{minipage}[c]{.3\textwidth}
        \caption{\label{tab:datasets}Benchmark datasets. Columns: number of nodes in the largest connected component, number of undirected edges, number of node classes, and the average number of edges per node.}
    \end{minipage}
\end{table}

\paragraph{Performance metrics}

We evaluated the performance using three main metrics: neighbor recall, $k$NN classification accuracy, and linear classification accuracy. Such metrics are standard for evaluating graph embedding quality \citep{perozzi2014deepwalk, grover2016node2vec, zhu2020drgraph, zhong2023force}. In addition to that, we used three further metrics: a metric based on the link-prediction task (Appendix~\ref{sec:lpred}), top-$k$ 2-hop neighbor recall (Appendix~\ref{sec:jrecall}), and a Spearman correlation between shortest-path distances and embedding distances (on $1\,000$ random node pairs).

The neighbor recall quantifies how well local node neighborhoods are preserved in the embedding. We defined it as the average fraction of each node's graph neighbors that are among the node's nearest neighbors in the embedding:
\begin{equation}\label{eq:recall}
    \mathrm{Recall} = \frac{1}{|\mathcal{V}|} \sum_{i=1}^{|\mathcal{V}|} \frac{\big|N_G[i] \cap N_{E,k_i}[i]\big|}{k_i},
\end{equation}
where $|\mathcal{V}|$ is the number of nodes in the graph, $N_G[i]$ is the set of node $i$'s graph neighbors, $k_i = |N_G[i]|$ is the number of node $i$'s graph neighbors, and $N_{E,k_i}[i]$ denotes the set of node $i$'s $k_i$ nearest neighbors in the embedding space. This metric does not require ground-truth classes and is similar to what is commonly used in the literature to benchmark graph-layout algorithms \citep{kruiger2017graph, zhu2020drgraph,zhong2023force}. Therefore, we use this as our primary metric for measuring the embedding quality.

The $k$NN classification accuracy quantifies local class separation in the embedding. To calculate $k$NN accuracy, we randomly split all nodes into a training (90\% of all nodes) and a test set (10\%), and used the \texttt{KNeighborsClassifier} from scikit-learn \citep{pedregosa2011scikit} with $k = 15$. 

We used the cosine similarity for all $k$NN evaluations (recall and accuracy) in $d=128$. CNE uses the cosine metric in its loss function (Equation~\ref{eq:cne-cosine-aff}), so only cosine neighbors make sense for evaluation. DeepWalk and node2vec rely on word2vec, which uses dot-product similarity in the loss function, and the original paper also used cosine metric for evaluation \citep{mikolov2013distributed}. In our experiments cosine evaluation led to better results on average for DeepWalk and node2vec. For all $k$NN evaluations in $d=2$, we used the Euclidean distance.

For linear accuracy we used  \texttt{LogisticRegression} from scikit-learn with no regularization (\texttt{penalty=None}), SAGA solver \citep{defazio2014saga} with \texttt{tol=0.01}, and the same train/test split. We standardized all features to have unit variance, based on the training set (as this speeds up convergence of the solver).

\section{Node embeddings with graph NE require low temperature}
\label{sec:sbm}

\begin{figure}[t]
    \centering
    \includegraphics[width=\textwidth]{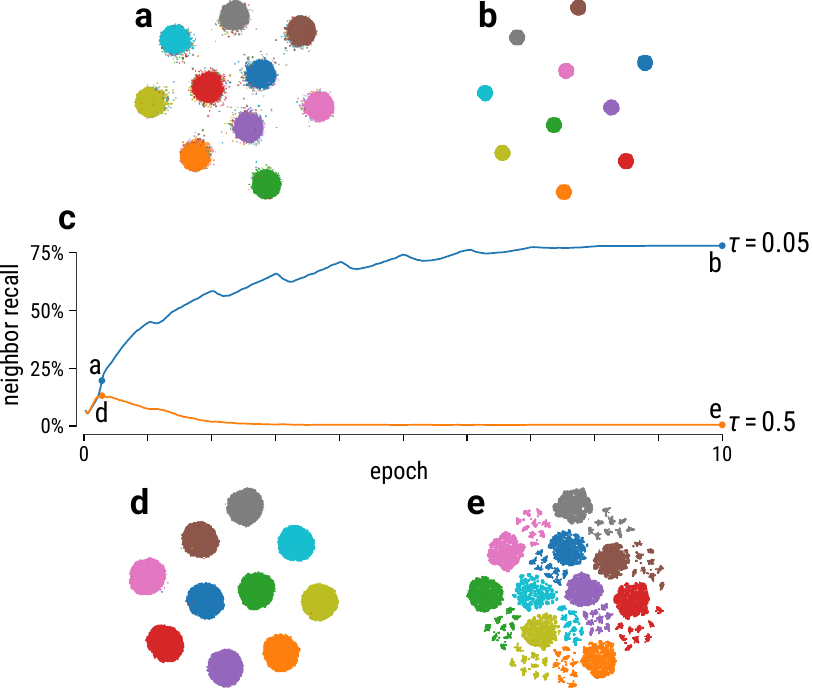}
    \caption{Learning dynamics of the 128-dimensional CNE embeddings of nodes in a stochastic-block-model graph with 10 blocks. \textbf{(a, b)}~$t$-SNE visualizations of the 128D CNE embeddings with $\tau=0.05$, during the first epoch and after ten epochs.  \textbf{(c)} The neighbor recall as a function of the training epoch, for $\tau=0.05$ and for $\tau=0.5$. Labeled points correspond to $t$-SNE visualizations left/right. \textbf{(d,~e)}~Same as (a, b), but for $\tau=0.5$.}
    \label{fig:sbm}
\end{figure}

In pilot experiments, we noticed that the node-embedding performance of graph NE (in 128D, with CNE backend) was strongly affected by the temperature parameter $\tau$. To investigate it further, we synthesized a graph following the stochastic block model \citep[SBM;][]{holland1983stochastic}. The generated graph had 80\,000 nodes in 10 clusters, with any two nodes from the same cluster having probability $2.5\cdot10^{-3}$ to be connected by an edge, and any two nodes from two different clusters having probability $5\cdot10^{-6}$ to be connected. The resulting graph has a clear community structure that should be easy to recover.

CNE with the default temperature $\tau = 0.5$ achieved near-perfect class separation but failed to retain the neighborhood structure. The neighbor recall, after reaching 13\% within the first training epoch, collapsed to below 1\% over the next several epochs (\cref{fig:sbm}c, orange line). The $t$-SNE visualization of the high-dimensional embedding at the point of maximum neighbor recall showed ten compact clusters (\cref{fig:sbm}d), but after convergence it showed nine subclusters for each of the ten classes (\cref{fig:sbm}e).  These smaller subclusters corresponded to nodes with an inter-cluster edge to a specific other class. During the optimization, these nodes got `pulled out' of their class, destroying the local structure of the embedding and leading to near-zero neighbor recall.

In contrast, CNE with a lower temperature $\tau=0.05$ did not show this behavior. The neighbor recall was almost monotonically increasing during training, reaching 78\% after 10 epochs (\cref{fig:sbm}c, blue line). The $t$-SNE visualization showed ten compact clusters (\cref{fig:sbm}b), without any visible subclusters. Our interpretation is that the InfoNCE loss with low temperature could effectively ignore the noise in form of rare inter-class edges.

In the following experiments, we set the temperature of graph NE with CNE backend to $\tau=0.05$ for all datasets. We have also implemented learnable temperature, making $\tau$ an additional trainable parameter. We found that on all our benchmark datasets, the temperature converged towards a value in a range of $[0.04, 0.08]$ (Table~\ref{tab:tau}). In this setting, the evaluation results were close to the results with fixed $\tau=0.05$ and both of them were usually much better than with $\tau=0.5$. We report the performance for learned $\tau$, $\tau=0.5$, and $\tau=0.05$ in~\cref{tab:knn,tab:lin,tab:recall-full,tab:lpred,tab:spcorr,tab:jrecall}.

\section{Benchmarking graph NE}

\begin{figure*}[t]
\includegraphics[width=\textwidth]{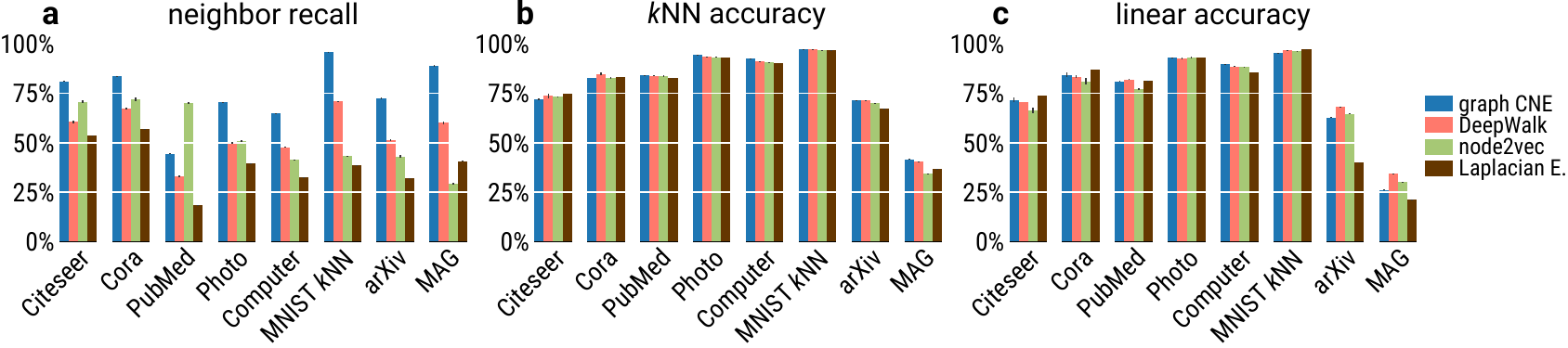}
\caption{\label{fig:embedding-benchmarks}Performance metrics for node embeddings: \textbf{(a)} neighbor recall, \textbf{(b)} $k$NN accuracy, \textbf{(c)}~linear accuracy. Datasets are ordered by the number of edges.  For node2vec we did a grid search over $p,q\in\{0.25, 0.5, 1, 2, 4\}$ (\cref{fig:node2vec-pq}) and show results with the highest neighbor recall.}
\vskip\floatsep
\includegraphics[width=\textwidth]{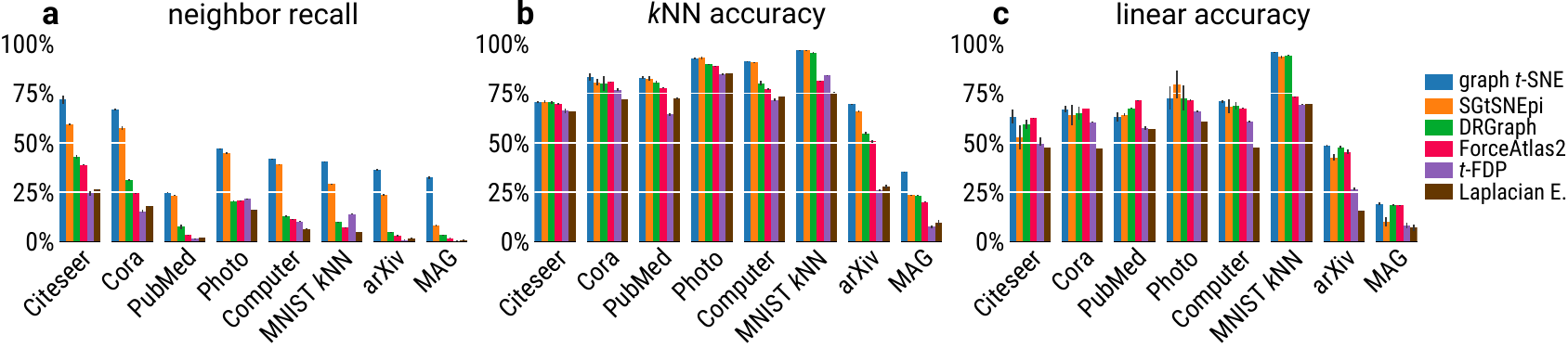}
\caption{Performance metrics for 2D graph layouts: \textbf{(a)} neighbor recall, \textbf{(b)} $k$NN accuracy, \textbf{(c)} linear accuracy. See \cref{fig:graph-layouts,fig:all-embeddings} for the corresponding layouts.}
\label{fig:layout-benchmark}
\end{figure*}

\begin{table*}[t]
\caption{Neighbor recall for all methods and datasets (in \%). All values are mean $\pm$ standard deviation across three training runs. The top performing method for each dimensionality is highlighted in bold. Methods in \textcolor{gne}{\textbf{blue}} are ours. See Table~\ref{tab:recall-full} for additional graph NE variants in 128D.}
\label{tab:recall}
    \vskip0.075in
    \scriptsize\centering\setlength{\tabcolsep}{5pt}
    \resizebox{\linewidth}{!}{%
    \begin{tabular}{cl@{\quad}rrrrrrrr}
        \toprule
         \vadjust{}\hfill{}$d$ \vadjust{}\hfill{} &
        \vadjust{}\hfill{}Method\hfill\vadjust{} &
            \vadjust{}\hfill{}Citeseer\hfill\vadjust{} &
            \vadjust{}\hfill{}Cora\hfill\vadjust{} &
            \vadjust{}\hfill{}PubMed\hfill\vadjust{} &
            \vadjust{}\hfill{}Photo\hfill\vadjust{} &
            \vadjust{}\hfill{}Computer\hfill\vadjust{} &
            \vadjust{}\hfill{}MNIST\hfill\vadjust{} &
            \vadjust{}\hfill{}arXiv\hfill\vadjust{} &
            \vadjust{}\hfill{}MAG\hfill\vadjust{} \\
        \midrule
        \multirow{4}{*}{128}&{\bf\color{gne}graph NE} & {\bf81.0}${}\pm{}$0.1 & {\bf83.8}${}\pm{}$0.0 & {44.3}${}\pm{}$0.2 & {\bf70.3}${}\pm{}$0.1 & {\bf64.8}${}\pm{}$0.0 & {\bf96.0}${}\pm{}$0.0 & {\bf72.3}${}\pm{}$0.6 & {\bf89.0}${}\pm{}$0.5 \\
        &DeepWalk & {60.5}${}\pm{}$0.9 & {67.1}${}\pm{}$0.4 & {32.9}${}\pm{}$0.6 & {50.0}${}\pm{}$0.5 & {47.7}${}\pm{}$0.3 & {70.8}${}\pm{}$0.2 & {51.4}${}\pm{}$0.6 & {60.0}${}\pm{}$0.7 \\
        &node2vec & {70.7}${}\pm{}$0.6 & {72.1}${}\pm{}$1.0 & {\bf70.1}${}\pm{}$0.3 & {50.9}${}\pm{}$0.5 & {41.3}${}\pm{}$0.2 & {43.2}${}\pm{}$0.2 & {42.9}${}\pm{}$0.6 & {29.3}${}\pm{}$0.4 \\
        &Laplacian E. & {53.4}${}\pm{}$0.0 & {56.7}${}\pm{}$0.0 & {18.3}${}\pm{}$0.1 & {39.7}${}\pm{}$0.0 & {32.4}${}\pm{}$0.0 & {38.5}${}\pm{}$0.0 & {32.1}${}\pm{}$0.1 & {40.6}${}\pm{}$0.3 \\
        \midrule
        \multirow{6}{*}{2} & {\bf\color{gne}graph NE} & {\bf71.7}${}\pm{}$2.2 & {\bf66.7}${}\pm{}$0.5 & {\bf25.0}${}\pm{}$0.2 & {\bf46.9}${}\pm{}$0.2 & {\bf41.8}${}\pm{}$0.1 & {\bf40.2}${}\pm{}$0.2 & {\bf36.3}${}\pm{}$0.3 & {\bf32.3}${}\pm{}$0.7 \\
        &SGtSNEpi & {59.1}${}\pm{}$0.3 & {57.4}${}\pm{}$0.8 & {23.3}${}\pm{}$0.3 & {44.5}${}\pm{}$0.4 & {39.0}${}\pm{}$0.3 & {29.1}${}\pm{}$0.1 & {23.6}${}\pm{}$0.3 & {8.1}${}\pm{}$0.4 \\
        &DRGraph & {42.8}${}\pm{}$1.0 & {31.0}${}\pm{}$0.5 & {7.5}${}\pm{}$1.1 & {20.5}${}\pm{}$0.1 & {12.8}${}\pm{}$0.4 & {10.0}${}\pm{}$0.1 & {4.7}${}\pm{}$0.2 & {3.2}${}\pm{}$0.3 \\
        &ForceAtlas2 & {38.8}${}\pm{}$0.3 & {24.4}${}\pm{}$0.3 & {3.2}${}\pm{}$0.1 & {20.6}${}\pm{}$0.1 & {11.1}${}\pm{}$0.1 & {7.0}${}\pm{}$0.0 & {2.9}${}\pm{}$0.3 & {1.7}${}\pm{}$0.2 \\
        &$t$-FDP & {24.4}${}\pm{}$1.2 & {15.2}${}\pm{}$0.7 & {1.3}${}\pm{}$0.1 & {21.6}${}\pm{}$0.1 & {10.2}${}\pm{}$0.2 & {13.9}${}\pm{}$0.1 & {0.7}${}\pm{}$0.2 & {0.3}${}\pm{}$0.1 \\
        &Laplacian E. & {26.2}${}\pm{}$0.1 & {17.9}${}\pm{}$0.0 & {2.0}${}\pm{}$0.1 & {16.0}${}\pm{}$0.0 & {6.4}${}\pm{}$0.0 & {5.0}${}\pm{}$0.0 & {1.6}${}\pm{}$0.3 & {0.8}${}\pm{}$0.1 \\

        \bottomrule
    \end{tabular}
    }
\end{table*}

\subsection{Graph NE outperforms other node embeddings}
\label{sec:results-graph-cne}

We compared graph NE with the popular non-parametric node-embedding algorithms DeepWalk~\citep{perozzi2014deepwalk} and node2vec~\citep{grover2016node2vec}, as well as with Laplacian Eigenmaps (LE), all optimizing 128-dimensional embeddings. We used node2vec's implementation from PyTorch Geometric \citep{fey2019graph} and the DeepWalk implementation from DGL \citep{wang2019deep}.  We ran both methods with the default parameters for 100~epochs (as we did for CNE, see \cref{fig:runtime} for runtimes). 
For node2vec, we ran a sweep over the parameters $p, q\in\{0.25, 0.5, 1, 2, 4\}$, as in the original paper, and report the results with the highest neighbor recall (for all results, see \cref{fig:node2vec-pq}). For LE we used scikit-learn \citep{pedregosa2011scikit}, with LOBPCG \citep{knyazev2007block} for solving the generalized eigenproblem.

We found that graph NE outperformed the other algorithms in terms of neighbor recall on seven datasets out of eight; the only exception was the PubMed dataset (\cref{fig:embedding-benchmarks}a, \cref{tab:recall}). Across all datasets, the average gap in neighbor recall between graph NE and the best other method was 13.4 percentage points, showing a strong improvement over competitors. In terms of the top-k 2-hop neighbor recall (\cref{sec:jrecall}), our graph NE outperformed the competitors on all datasets apart from the Photo graph (\cref{tab:jrecall}).

In terms of the classification accuracies, the results on most datasets were very similar across all methods. Graph NE had slightly lower $k$NN accuracy on the two smallest datasets (Citeseer and Cora), and was the best or within 1\% of the best on all other datasets (\cref{fig:embedding-benchmarks}b, \cref{tab:knn}). In terms of linear accuracy, graph NE yielded competitive results and lagged only slightly behind other methods for some datasets (\cref{fig:embedding-benchmarks}c, \cref{tab:lin}). Curiously, graph NE with $\tau=0.5$ was the best or within 1\% of the best on all datasets apart from Cora and MAG, where it was slightly behind (\cref{tab:lin}); but this temperature led to substantially worse neighbor recall (\cref{tab:recall-full}). This suggests a trade-off between linear classification and neighbor quality.

Graph NE outperformed the other methods on the link prediction task (\cref{tab:lpred}) but the performance for many methods was close to saturated 100\%. For that reason  we prefer the $k$NN recall metric which is conceptually similar (Appendix~\ref{sec:lpred}). Graph NE also outperformed all other methods in terms of Spearman correlation between the shortest-path distances and the embedding distances, on all datasets apart from MAG, where Laplacian Eigenmaps showed the best results (Figure~\ref{tab:spcorr}).

In summary, results in terms of classification accuracies were all similar, but neighbor recall showed large and pronounced differences with graph NE performing the best by a large margin.

\subsection{Graph NE outperforms other graph layouts in terms of local structure}
\label{sec:results-graph-t-sne}

We benchmarked graph NE with the \texttt{openTSNE} backend against five existing graph-layout algorithms: SGtSNEpi \cite{pitsianis2019spaceland}, ForceAtlas2 \citep[FA2;][]{jacomy2014forceatlas2}, Laplacian Eigenmaps \citep[LE;][]{belkin2003laplacian}, DRGraph \citep{zhu2020drgraph}, and $t$-FDP \citep{zhong2023force}. We also performed comparisons with Diffusion Maps (\citealp{coifman2006diffusion}, with $t=1$ diffusion step), which differ from Laplacian Eigenmaps only by scaling, but found that they produced results very similar to Laplacian Eigenmaps, so we do not report them separately. 
We did not include tsNET \citep{kruiger2017graph}, because it cannot embed large graphs.\footnote{While our paper was in print, \citet{meidiana2025bhtsnet} developed a fast implementation of tsNET. We leave its benchmarking for future work. In parallel, \citet{meidiana2025ssgumap} studied UMAP applied to the pairwise shortest-path distances between nodes, like in tsNET.} Unless specified otherwise, we used the original implementation of the algorithms and ran them with the default parameters.  For FA2 we used the Barnes--Hut implementation by \citet{fa2}. Both \tsne\ and \tfdp\ are implemented in Cython, \drgraph\ and \sgtsnepi\ are implemented in C++ and offer wrappers in Python and Julia, respectively. For consistency, we used Diffusion Maps initialization for all algorithms where possible (all except \sgtsnepi\ and \drgraph).

Graph NE showed outstanding performance on all of our benchmark datasets. The neighbor recall of graph NE was always the highest, with \sgtsnepi\ only sometimes coming close (\cref{fig:layout-benchmark}a, \cref{tab:recall}). 
In the top-k 2-hop neighbor recall that focuses on pairs of nodes that share many neighbors (\cref{sec:jrecall}), graph NE also performed on average the best (\cref{tab:jrecall}). In this metric, SGtSNEpi only marginally (within 1\%) outperformed graph NE on two graphs and was much worse on several other graphs.

In terms of $k$NN accuracy, graph NE was either the top performing method or within 1\% of the top performing method for all datasets (\cref{fig:layout-benchmark}b, \cref{tab:knn}). In terms of linear accuracy the same was true for six out of the eight datasets (\cref{fig:layout-benchmark}c, \cref{tab:lin}). 

In terms of the Spearman correlation between the shortest-path distances and the embedding distances, consistent winners across datasets were ForceAtlas2 and $t$-FDP (\cref{tab:spcorr}). This is likely related to the fact that in 2D embeddings, there is a trade-off between preserving local and global structure \citep{bohm2022attraction}. 

Qualitatively, graph NE layouts performed well in terms of separating clusters from each other and bringing out sub-cluster details within individual clusters (\cref{fig:graph-layouts}).

\begin{figure*}
\centering\includegraphics{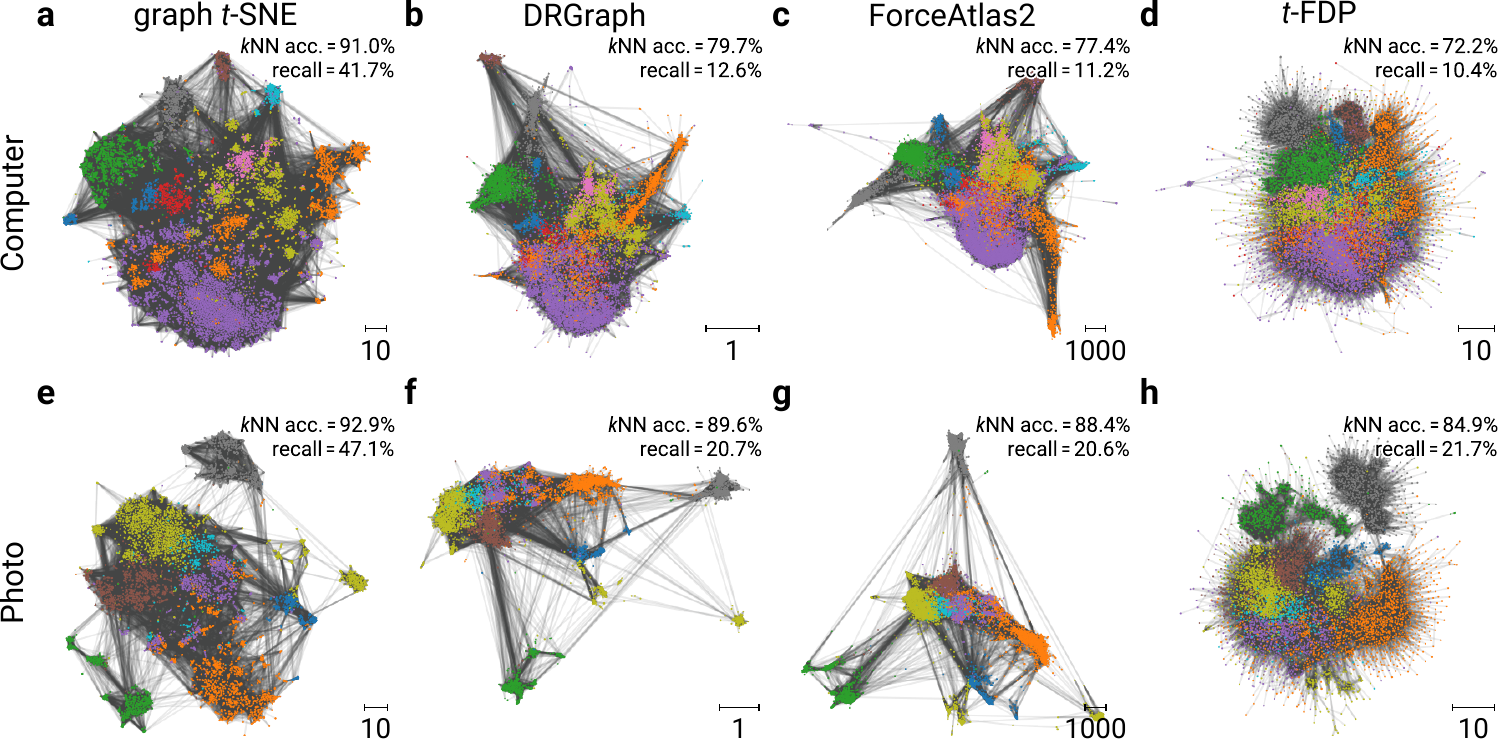}
\caption{Embeddings of the Computer and Photo datasets obtained using  our graph NE (graph \tsne), DRGraph, ForceAtlas2, and $t$-FDP. Embeddings were aligned using Procrustes rotation. See Figure~\ref{fig:all-embeddings} for all datasets and methods.}
\label{fig:graph-layouts}
\end{figure*}

\section{Discussion}
\label{sec:discussion}

We suggested \textit{graph NE}, a novel approach to non-parametric node embeddings, and showed that it outperforms existing competitors in terms of preserving local graph structure, both for high-dimensional embeddings and for 2D graph layouts. Graph NE can be efficiently implemented using existing neighbor-embedding backends, CNE and openTSNE. Both backends scale linearly in the number of graph edges and achieve competitive runtimes for large graphs. For small graphs, we observed that the openTSNE backend was slower than some of the competitors (\cref{fig:runtime}).

In this work, we focused on complex real-world graphs and have purposefully not tested our graph NE on simple planar graphs or 3D mesh graphs that are often used for benchmarking graph layout algorithms. Such graphs are arguably not an interesting case for high-dimensional embeddings, and we aimed to use the same graphs for all of our benchmarks.

Our work opens up several directions for future work. First, CNE allows to train parametric embeddings \citep{damrich2022contrastive}, which we have not explored here. How would parametric CNE with a GCN mapping compare to existing GCL methods, in particular augmentation-free methods? 
Parametric models need to use node features. Given  the ongoing debate about the usefulness of combining node and edge information in graph learning~\citep{errica2020fair, faber2021should, bechler2024graph, coupette2025no}, it would be interesting to study how much node features can help with node embeddings.

Second, we only used \tsne-like losses here, but a similar approach could be implemented using other neighbor-embedding algorithms, e.g. UMAP \citep{mcinnes2018umap}. How would graph UMAP (2D and high-D) perform for graph layouts and node embeddings, especially in contrast to DRgraph and DeepWalk/node2vec, which, like UMAP, use negative sampling for optimization?

Third, our results point to a non-trivial effect that the temperature parameter~$\tau$ can have on InfoNCE-based embeddings (\cref{fig:sbm}). Further investigation of this phenomenon and its potential relevance for contrastive learning in computer vision and other domains also remains for future work.

Our graph NE algorithms succeed \emph{because} of their simplicity, not \emph{despite} of it.  The straightforward loss function enables efficient optimization strategies, which scale linearly with the graph size and preserve nodes' neighbors better than other algorithms.

\clearpage

\section*{Code availability}

Our code is available at \url{https://github.com/berenslab/graph-ne-paper}.

\section*{Acknowledgements}

The authors would like to thank Leland McInnes and Pavlin Poli\v{c}ar for discussions, and Philipp Berens for support and feedback. The authors benefited from the discussions at the Dagstuhl seminar 24122 supported by the Leibniz Center for Informatics.

This work was partially funded by the Gemeinnützige Hertie-Stiftung, the Cyber Valley Research Fund (D.30.28739), the Danmarks Frie Forskningsfond (Sapere Aude 1051-00106B), and the National Institutes of Health (UM1MH130981). The content is solely the responsibility of the authors and does not necessarily represent the official views of the National Institutes of Health. Dmitry Kobak is a member of the Germany’s Excellence cluster 2064 ``Machine Learning --- New Perspectives for Science'' (EXC 390727645). The authors thank the International Max Planck Research School for Intelligent Systems (IMPRS-IS) for supporting Jan Niklas B\"ohm.

\bibliography{bibliography}
\bibliographystyle{tmlr}

\clearpage
\appendix
\section*{Appendix}
\renewcommand{\thefigure}{S\arabic{figure}}
\setcounter{figure}{0}  
\renewcommand{\thetable}{S\arabic{table}}
\setcounter{table}{0} 
\renewcommand{\theHtable}{Supplement.\thetable}
\renewcommand{\theHfigure}{Supplement.\thefigure}

\section{Proof of Theorem 1}
\label{sec:proof_thm1}
\InfoNCEthm*

\begin{proof}
    The key idea is to rewrite the loss as an average over cross-entropy losses over the set $\{0, \dots, m\}$.
    For $i, k_0, \dots, k_m \in \{1, \dots, n\}$ define
    \begin{align}
        \alpha(i, k_0, \dots, k_m) &\coloneqq \sum_{\mu = 0}^m \Big(p_{ik_\mu} \prod_{\nu=0, \nu\neq \mu}^m \xi_{k_\nu} \Big) \\
        \beta(\mu \mid i, k_0, \dots, k_m) &\coloneqq \frac{p_{ik_\mu}\prod_{\nu=0, \nu\neq \mu}^m \xi_{k_\nu}}{\alpha(i, k_0, \dots, k_m)} = \frac{p_{ik_\mu}/\xi_{k_\mu}}{\sum_{\nu=0}^m p_{ik_\nu} / \xi_{k_\nu}}\\
        \gamma_\theta(\mu \mid i, k_0, \dots, k_m) &\coloneqq \frac{w_{ik_\mu}(\theta)/\xi_{k_\mu}}{\sum_{\nu=0}^m w_{ik_\nu}(\theta)/ \xi_{k_\nu}}
    \end{align}
    Note that $\alpha(i, k_0, \dots, k_m)$ can only be zero if $p_{ik_\mu}=0$ for all $\mu = 0, \dots, m$. In this case, we define $\beta(\mu \mid i, k_0, \dots, k_m)=0$ for all $\mu = 0, \dots, m$. If all $w_{ik_\nu}(\theta)$ are zero for $\nu=0, \dots, m$, we define \mbox{$\gamma_\theta(\mu \mid i, k_0, \dots, k_m) = 0$.}
    If they are not identical to zero, $\beta(\cdot \mid i, k_0, \dots, k_m) $ and $\gamma_\theta(\cdot \mid i, k_0, \dots, k_m)$ are probability distributions over $\{0, \dots, m\}$.

    By the proof of Theorem 4.1 in \citet{ma2018noise}, we can rewrite 
    \begin{align}
        \mathcal{L}^\text{InfoNCE}(\theta) = \sum_{i, k_0, \dots, k_m\in [n]} \frac{\alpha(i, k_0, \dots, k_m)}{m+1} \bigg(-\sum_{\mu=0}^m \beta(\mu \mid i, k_0, \dots, k_m)\log \gamma_\theta(\mu \mid i, k_0, \dots, k_m)\bigg)
    \end{align}
    The latter term in parentheses is a cross-entropy loss over probability distributions over the set $\{0, \dots, m\}$. It becomes minimal if and only if $\beta(\cdot \mid i, k_0, \dots, k_m) = \gamma_\theta(\cdot \mid i, k_0, \dots, k_m)$. Since $\alpha(i, k_0, \dots, k_m)$ is positive as soon as $\min_\mu p_{ik_\mu} > 0$ for some $\mu$, we conclude that the minima of $\mathcal{L}^\text{InfoNCE}(\theta)$ are those $\theta$ for which $\beta(\cdot \mid i, k_0, \dots, k_m) = \gamma_\theta(\cdot \mid i, k_0, \dots, k_m)$ whenever \mbox{$\min_\mu p_{ik_\mu}>0$.}

    Clearly $\theta^*$ is a minimum of $\mathcal{L}^\text{InfoNCE}(\theta)$ as 
    $$\beta(\cdot \mid i, k_0, \dots, k_m) = \gamma_{\theta^*}(\cdot \mid i, k_0, \dots, k_m)$$
    holds for all $i, k_0, \dots, k_m \in [n]$.

    Conversely, let $\tilde{\theta}$ be a minimizer of $\mathcal{L}^\text{InfoNCE}(\theta)$. Consider $ij$ with $p_{ij} >0$ and arbitrary $k \in [n]$. Taking $k_0 = j$ and $k_1,\dots, k_m = k$, we have
    \begin{align}
        \beta(0 \mid i, k_0=j, k_1=k, \dots, k_m=k) &= \gamma_{\tilde{\theta}}(0 \mid i, k_0=j, k_1=k, \dots, k_m=k) \\
        \Leftrightarrow  \frac{p_{ij}/\xi_j}{p_{ij}/\xi_j + m p_{ik} / \xi_k} &= \frac{w_{ij}(\tilde{\theta})/\xi_j}{w_{ij}(\theta)/\xi_j + m w_{ik}(\tilde{\theta}) / \xi_k} \quad (\text{hence } w_{ij}(\tilde{\theta}) >0)
       \\
        \Leftrightarrow \frac{1}{1+ m \frac{p_{ik}\xi_j}{p_{ij}\xi_k}} &= \frac{1}{1 + m  \frac{w_{ik}(\tilde{\theta}) \xi_j}{w_{ij}(\tilde{\theta})\xi_k}}\\
        \Leftrightarrow \frac{p_{ik}}{p_{ij}} &= \frac{w_{ik}(\tilde{\theta})}{w_{ij}(\tilde{\theta})}\\
        \Rightarrow \frac{\sum_k p_{ik}}{p_{ij}} &= \frac{\sum_k w_{ik}(\tilde{\theta})}{w_{ij}(\tilde{\theta})}\\
        \Rightarrow \frac{w_{ij}(\tilde{\theta})}{p_{ij}} &= \frac{\sum_k w_{ik}(\tilde{\theta})}{\sum_k p_{ik}} \eqqcolon c_i, \label{eq:ma_result}
    \end{align}
    which only depends on $i$, not on $j$. By symmetry of $w$, we have for any $ij$ with $p_{ij}>0$ that $p_{ji}>0$, too, and
    $$c_i = \frac{w_{ij}(\tilde{\theta})}{p_{ij}}  = \frac{w_{ij}(\tilde{\theta})}{w_{ij}(\theta^*)} = \frac{w_{ji}(\tilde{\theta})}{w_{ji}(\theta^*)} = \frac{w_{ji}(\tilde{\theta})}{p_{ji}} = c_j.$$

    Now we use the assumption that one can get from any $i$ to any $j$ via transitions with positive $p$. This implies that for each $i$, there is some $j$ with $p_{ij}>0$ (which we ensure in our experiments by only considering a connected component, \cref{sec:setup}).     Moreover, for any $i,j$ we get $c_i= c_j \eqqcolon c$. So, we have for any $ij$ with $p_{ij}>0$ that $p_{ij} = w_{ij}(\tilde{\theta})/c$. 
    
    Finally, let $ij$ be such that $p_{ij} =0$. By the path connectedness assumption, there is some $k\in [n]$ with $p_{ik} >0$. Taking $k_0 = k$ and $k_1, \dots, k_m = j$ we get as above that 
    \begin{align}
        1 = \frac{p_{ik}}{p_{ik} + m p_{ij}} = \frac{w_{ik}(\tilde{\theta})}{w_{ik}(\tilde{\theta})+ m w_{ij}(\tilde{\theta})}
    \end{align}
    from which we can conclude that $w_{ij}(\tilde{\theta})$ must be zero, whenever $p_{ij}$ is zero. This implies $p = w(\tilde{\theta}) / c$.

    It is well-known that the KL divergence between two probability distributions is minimal if and only if they are equal.
\end{proof}

Our setup in Theorem~\ref{thm:infonce} deviates from that of Theorem 4.1 in~\citet{ma2018noise}, because they only show a statement along the lines of \cref{eq:ma_result}. In contrast, we show that the InfoNCE loss has same minima as the KL divergence using stronger assumptions (\citeauthor{ma2018noise}'s Assumption 2.2, symmetry, and the path connectedness with positive probability).

\section{Link prediction as evaluation metric}
\label{sec:lpred}

Link prediction is one of the standard evaluation criteria in the graph-learning community \citep{kipf2017semi}.  In a link-prediction task, the predictive model is given pairs of nodes which may or may not be connected by an edge and must rank these pairs by the likelihood of predicting an edge correctly \citep{zhang2018link}. We used 10\% of all graph edges as positive test pairs and equally many non-edges as negative test pairs. We scored each of the test pairs based on their embedding points, either by cosine similarity (128D) or by  negative Euclidean distance (2D) and computed the area under the ROC curve, similar to the setup from \citet{grover2016node2vec}.

The neighbor recall metric is conceptually similar to link prediction. It is the recall of a directed link predictor $P$ that predicts the links from node $i$ to the $k_i$ nearest nodes to $i$'s, where $k_i$ is the degree of node $i$. In the case of constant, known node degree $k$, the recall $r$ is linearly related to the accuracy $a$ of $P$ by $a = 1 - (1-r) \cdot 2k/(n-1)$.  As the task of link prediction is easier and saturates faster compared to the neighbor recall (\cref{tab:recall-full} vs. \cref{tab:lpred}), we use neighbor recall as our main metric.

\section{Top-k 2-hop neighbor recall as evaluation metric}
\label{sec:jrecall}

Our neighbor recall metric, defined in \cref{eq:recall}, treats all neighbors of a node equally. In some cases, it may be desirable to focus on node pairs that have many \textit{shared} neighbors, and make sure that they are embedded close together. We quantified this using an  additional metric, which we call \textit{top-k 2-hop neighbor recall}. 

We first compute number of shared neighbors for any pair of nodes $i, j$:
\begin{equation}
    S(i,j) = \big | N_G[i] \cap N_G[j]\big |,
\end{equation}
where $N_G[i]$ is the set of neighbors of node $i$ in graph $G$. Be definition, the $S(i,\cdot)$ is non-zero for all 2-hop neighbors of node $i$. Note that  $S(i,j) \big/ \big | N_G[i] \cup N_G[j]\big |$ gives Jaccard similarity between $i$ and $j$.

Now let $N_{S,k}[i]$ denote the set of $k$ nodes with the highest values of $S(i, \cdot)$. The top-k 2-hop neighbor recall is the fraction of these nodes contained among the $k$ closest embedding points $N_{E,k}[i]$ of node $i$:
\begin{equation}
     \text{Top-k 2-hop recall} = \frac{1}{|\mathcal V|} \sum_{i=1}^{|\mathcal V|} \frac{\big|N_{S,k}[i] \cap N_{E,k}[i]\big|}{k}.
\end{equation}
We used $k=10$ and report the results in~\cref{tab:jrecall}. For nodes that had fewer than $k$ other nodes with non-zero $S(i, \cdot)$, the set $N_{S,k}[i]$ contained fewer than $k$ elements.

\clearpage

\section{Supplementary Figures and Tables}

\begin{table*}[h!]
    \caption{\label{tab:recall-full}Neighbor recall for all methods and datasets (in \%). All values are mean $\pm$ standard deviation across three training runs. The top performing method for each dimensionality (and all methods within 1\%) is highlighted in bold. Methods in \textcolor{gne}{\textbf{blue}} are ours. ``Graph NE$\tau$'' means that the temperature $\tau$ was learned as a parameter during training, see \cref{sec:sbm}.}
    \vskip0.075in
    \scriptsize\centering\setlength{\tabcolsep}{5pt}
    \begin{tabular}{clrrrrrrrr}
        \toprule
        $d$ &
            \vadjust{}\hfill{}Method\hfill\vadjust{} &
            \vadjust{}\hfill{}Citeseer\hfill\vadjust{} &
            \vadjust{}\hfill{}Cora\hfill\vadjust{} &
            \vadjust{}\hfill{}PubMed\hfill\vadjust{} &
            \vadjust{}\hfill{}Photo\hfill\vadjust{} &
            \vadjust{}\hfill{}Computer\hfill\vadjust{} &
            \vadjust{}\hfill{}MNIST\hfill\vadjust{} &
            \vadjust{}\hfill{}arXiv\hfill\vadjust{} &
            \vadjust{}\hfill{}MAG\hfill\vadjust{} \\

        \midrule
        \multirow{6}{*}{128}
        &         {\bf\color{gne}graph NE} & {\bf81.0}${}\pm{}$0.1 & {\bf83.8}${}\pm{}$0.0 & {44.3}${}\pm{}$0.2 & {\bf70.3}${}\pm{}$0.1 & {\bf64.8}${}\pm{}$0.0 & {\bf96.0}${}\pm{}$0.0 & {\bf72.3}${}\pm{}$0.5 & {89.0}${}\pm{}$0.4 \\
        &         {\bf\color{gne}graph NE$\tau$} & {\bf80.3}${}\pm{}$0.0 & {\bf82.8}${}\pm{}$0.1 & {42.3}${}\pm{}$0.1 & {\bf69.5}${}\pm{}$0.0 & {\bf64.0}${}\pm{}$0.1 & {\bf96.0}${}\pm{}$0.0 & {\bf72.5}${}\pm{}$0.7 & {\bf91.0}${}\pm{}$0.1 \\
        &         CNE, $\tau=0.5$ & {55.9}${}\pm{}$0.1 & {58.1}${}\pm{}$0.0 & {21.9}${}\pm{}$0.3 & {35.7}${}\pm{}$0.1 & {31.9}${}\pm{}$0.0 & {39.2}${}\pm{}$0.0 & {34.4}${}\pm{}$0.2 & {33.2}${}\pm{}$0.2 \\
        &         DeepWalk & {60.5}${}\pm{}$0.7 & {67.1}${}\pm{}$0.3 & {32.9}${}\pm{}$0.5 & {50.0}${}\pm{}$0.4 & {47.7}${}\pm{}$0.3 & {70.8}${}\pm{}$0.2 & {51.4}${}\pm{}$0.5 & {60.0}${}\pm{}$0.6 \\
        &         node2vec & {70.7}${}\pm{}$0.5 & {72.1}${}\pm{}$1.0 & {\bf70.1}${}\pm{}$0.3 & {50.9}${}\pm{}$0.5 & {41.3}${}\pm{}$0.2 & {43.2}${}\pm{}$0.2 & {42.9}${}\pm{}$0.6 & {29.3}${}\pm{}$0.3 \\
        &         Laplacian E. & {53.4}${}\pm{}$0.0 & {56.7}${}\pm{}$0.0 & {18.3}${}\pm{}$0.1 & {39.7}${}\pm{}$0.0 & {32.4}${}\pm{}$0.0 & {38.5}${}\pm{}$0.0 & {32.1}${}\pm{}$0.1 & {40.6}${}\pm{}$0.2 \\
        \midrule
        \multirow{6}{*}{2}
        &         {\bf\color{gne}graph NE} & {\bf71.7}${}\pm{}$1.8 & {\bf66.7}${}\pm{}$0.4 & {\bf25.0}${}\pm{}$0.1 & {\bf46.9}${}\pm{}$0.1 & {\bf41.8}${}\pm{}$0.1 & {\bf40.2}${}\pm{}$0.2 & {\bf36.3}${}\pm{}$0.3 & {\bf32.3}${}\pm{}$0.6 \\
        &         SGtSNEpi & {59.1}${}\pm{}$0.3 & {57.4}${}\pm{}$0.6 & {23.3}${}\pm{}$0.2 & {44.5}${}\pm{}$0.3 & {39.0}${}\pm{}$0.2 & {29.1}${}\pm{}$0.1 & {23.6}${}\pm{}$0.2 & {8.1}${}\pm{}$0.3 \\
        &         DRGraph & {42.8}${}\pm{}$0.8 & {31.0}${}\pm{}$0.4 & {7.5}${}\pm{}$0.9 & {20.5}${}\pm{}$0.1 & {12.8}${}\pm{}$0.3 & {10.0}${}\pm{}$0.1 & {4.7}${}\pm{}$0.1 & {3.2}${}\pm{}$0.2 \\
        &         ForceAtlas2 & {38.8}${}\pm{}$0.3 & {24.4}${}\pm{}$0.3 & {3.2}${}\pm{}$0.1 & {20.6}${}\pm{}$0.1 & {11.1}${}\pm{}$0.1 & {7.0}${}\pm{}$0.0 & {2.9}${}\pm{}$0.3 & {1.7}${}\pm{}$0.1 \\
        &         $t$-FDP & {24.4}${}\pm{}$0.9 & {15.2}${}\pm{}$0.6 & {1.3}${}\pm{}$0.1 & {21.6}${}\pm{}$0.1 & {10.2}${}\pm{}$0.2 & {13.9}${}\pm{}$0.1 & {0.7}${}\pm{}$0.1 & {0.3}${}\pm{}$0.1 \\
        &         Laplacian E. & {26.2}${}\pm{}$0.1 & {17.9}${}\pm{}$0.0 & {2.0}${}\pm{}$0.1 & {16.0}${}\pm{}$0.0 & {6.4}${}\pm{}$0.0 & {5.0}${}\pm{}$0.0 & {1.6}${}\pm{}$0.2 & {0.8}${}\pm{}$0.1 \\        
        \bottomrule
    \end{tabular}

    \caption{\label{tab:knn}$k$NN classification accuracy. The same setup as in \cref{tab:recall-full}, with random training/test splits used for each run.}
    \vskip0.075in
    \scriptsize\centering\setlength{\tabcolsep}{5pt}
    \begin{tabular}{clrrrrrrrr}
        \toprule
        $d$ &
            \vadjust{}\hfill{}Method\hfill\vadjust{} &
            \vadjust{}\hfill{}Citeseer\hfill\vadjust{} &
            \vadjust{}\hfill{}Cora\hfill\vadjust{} &
            \vadjust{}\hfill{}PubMed\hfill\vadjust{} &
            \vadjust{}\hfill{}Photo\hfill\vadjust{} &
            \vadjust{}\hfill{}Computer\hfill\vadjust{} &
            \vadjust{}\hfill{}MNIST\hfill\vadjust{} &
            \vadjust{}\hfill{}arXiv\hfill\vadjust{} &
            \vadjust{}\hfill{}MAG\hfill\vadjust{} \\

        \midrule
        \multirow{6}{*}{128}
        &         {\bf\color{gne}graph NE} & {72.0}${}\pm{}$0.4 & {82.7}${}\pm{}$0.0 & {\bf84.1}${}\pm{}$0.1 & {\bf94.3}${}\pm{}$0.1 & {\bf92.4}${}\pm{}$0.1 & {\bf97.2}${}\pm{}$0.0 & {\bf71.3}${}\pm{}$0.1 & {\bf41.6}${}\pm{}$0.1 \\
        &         {\bf\color{gne}graph NE$\tau$} & {72.2}${}\pm{}$0.4 & {83.1}${}\pm{}$0.3 & {\bf83.8}${}\pm{}$0.3 & {\bf94.3}${}\pm{}$0.1 & {\bf92.4}${}\pm{}$0.2 & {\bf97.1}${}\pm{}$0.0 & {\bf71.7}${}\pm{}$0.1 & {\bf41.7}${}\pm{}$0.1 \\
        &         CNE, $\tau=0.5$ & {72.8}${}\pm{}$0.2 & {83.3}${}\pm{}$0.2 & {\bf83.1}${}\pm{}$0.0 & {92.6}${}\pm{}$0.0 & {91.4}${}\pm{}$0.0 & {\bf96.9}${}\pm{}$0.0 & {\bf71.1}${}\pm{}$0.1 & {36.5}${}\pm{}$0.1 \\
        &         DeepWalk & {\bf73.6}${}\pm{}$1.0 & {\bf84.7}${}\pm{}$0.6 & {\bf83.7}${}\pm{}$0.2 & {93.3}${}\pm{}$0.1 & {91.1}${}\pm{}$0.2 & {\bf97.0}${}\pm{}$0.1 & {\bf71.2}${}\pm{}$0.1 & {40.5}${}\pm{}$0.0 \\
        &         node2vec & {73.1}${}\pm{}$0.4 & {82.8}${}\pm{}$0.5 & {\bf83.6}${}\pm{}$0.4 & {93.1}${}\pm{}$0.2 & {90.5}${}\pm{}$0.2 & {\bf96.8}${}\pm{}$0.0 & {70.1}${}\pm{}$0.0 & {34.4}${}\pm{}$0.1 \\
        &         Laplacian E. & {\bf74.5}${}\pm{}$0.0 & {83.1}${}\pm{}$0.0 & {82.6}${}\pm{}$0.0 & {93.0}${}\pm{}$0.0 & {90.3}${}\pm{}$0.0 & {\bf96.7}${}\pm{}$0.0 & {67.2}${}\pm{}$0.1 & {36.5}${}\pm{}$0.0 \\
        \midrule
        \multirow{6}{*}{2}
        &         {\bf\color{gne}graph NE} & {\bf70.3}${}\pm{}$0.4 & {\bf83.1}${}\pm{}$1.5 & {\bf82.9}${}\pm{}$0.5 & {\bf92.6}${}\pm{}$0.5 & {\bf91.0}${}\pm{}$0.0 & {\bf96.8}${}\pm{}$0.1 & {\bf69.4}${}\pm{}$0.2 & {\bf35.3}${}\pm{}$0.0 \\
        &         SGtSNEpi & {\bf70.6}${}\pm{}$0.6 & {80.5}${}\pm{}$1.3 & {\bf82.4}${}\pm{}$0.9 & {\bf92.8}${}\pm{}$0.4 & {\bf90.6}${}\pm{}$0.3 & {\bf96.9}${}\pm{}$0.1 & {65.9}${}\pm{}$0.3 & {23.5}${}\pm{}$0.3 \\
        &         DRGraph & {\bf70.3}${}\pm{}$0.7 & {79.8}${}\pm{}$2.9 & {80.4}${}\pm{}$0.6 & {89.8}${}\pm{}$0.2 & {80.1}${}\pm{}$0.9 & {95.2}${}\pm{}$0.3 & {54.7}${}\pm{}$0.7 & {23.3}${}\pm{}$0.1 \\
        &         ForceAtlas2 & {69.3}${}\pm{}$0.4 & {80.6}${}\pm{}$0.3 & {77.6}${}\pm{}$0.2 & {88.6}${}\pm{}$0.2 & {77.2}${}\pm{}$0.4 & {81.4}${}\pm{}$0.0 & {50.6}${}\pm{}$0.6 & {20.0}${}\pm{}$0.2 \\
        &         $t$-FDP & {66.0}${}\pm{}$1.0 & {76.6}${}\pm{}$0.6 & {64.2}${}\pm{}$0.5 & {84.5}${}\pm{}$0.4 & {71.6}${}\pm{}$0.5 & {83.9}${}\pm{}$0.0 & {26.1}${}\pm{}$0.4 & {7.5}${}\pm{}$0.6 \\
        &         Laplacian E. & {65.6}${}\pm{}$0.0 & {71.8}${}\pm{}$0.0 & {72.4}${}\pm{}$0.2 & {84.8}${}\pm{}$0.2 & {73.2}${}\pm{}$0.2 & {75.3}${}\pm{}$0.1 & {27.8}${}\pm{}$0.8 & {9.4}${}\pm{}$1.2 \\
        \bottomrule
    \end{tabular}
    \caption{\label{tab:lin}Linear classification accuracy.  The same setup as in \cref{tab:knn} applies.}
    \vskip0.075in
    \scriptsize\centering\setlength{\tabcolsep}{5pt}
   \begin{tabular}{clrrrrrrrr}
        \toprule
        $d$ &
            \vadjust{}\hfill{}Method\hfill\vadjust{} &
            \vadjust{}\hfill{}Citeseer\hfill\vadjust{} &
            \vadjust{}\hfill{}Cora\hfill\vadjust{} &
            \vadjust{}\hfill{}PubMed\hfill\vadjust{} &
            \vadjust{}\hfill{}Photo\hfill\vadjust{} &
            \vadjust{}\hfill{}Computer\hfill\vadjust{} &
            \vadjust{}\hfill{}MNIST\hfill\vadjust{} &
            \vadjust{}\hfill{}arXiv\hfill\vadjust{} &
            \vadjust{}\hfill{}MAG\hfill\vadjust{} \\
        \midrule
        \multirow{6}{*}{128}
        &         {\bf\color{gne}graph NE} & {71.5}${}\pm{}$1.2 & {84.3}${}\pm{}$1.0 & {80.9}${}\pm{}$0.2 & {\bf93.0}${}\pm{}$0.2 & {\bf89.7}${}\pm{}$0.2 & {95.3}${}\pm{}$0.0 & {62.6}${}\pm{}$0.2 & {26.1}${}\pm{}$0.1 \\
        &         {\bf\color{gne}graph NE$\tau$} & {71.9}${}\pm{}$1.4 & {83.6}${}\pm{}$0.7 & {80.2}${}\pm{}$0.7 & {\bf93.4}${}\pm{}$0.4 & {\bf89.5}${}\pm{}$0.4 & {90.8}${}\pm{}$0.3 & {63.6}${}\pm{}$0.1 & {27.6}${}\pm{}$0.1 \\
        &         CNE, $\tau=0.5$ & {\bf72.8}${}\pm{}$0.8 & {84.3}${}\pm{}$0.7 & {\bf83.6}${}\pm{}$0.3 & {\bf92.7}${}\pm{}$0.2 & {\bf89.4}${}\pm{}$0.2 & {\bf97.1}${}\pm{}$0.0 & {\bf67.9}${}\pm{}$0.2 & {32.2}${}\pm{}$0.0 \\
        &         DeepWalk & {70.3}${}\pm{}$0.0 & {83.3}${}\pm{}$0.7 & {81.9}${}\pm{}$0.3 & {\bf92.5}${}\pm{}$0.4 & {88.5}${}\pm{}$0.2 & {\bf96.7}${}\pm{}$0.1 & {\bf68.1}${}\pm{}$0.0 & {\bf34.2}${}\pm{}$0.1 \\
        &         node2vec & {66.4}${}\pm{}$1.4 & {81.0}${}\pm{}$1.4 & {77.0}${}\pm{}$0.5 & {\bf93.0}${}\pm{}$0.5 & {88.5}${}\pm{}$0.1 & {\bf96.1}${}\pm{}$0.1 & {64.6}${}\pm{}$0.0 & {29.9}${}\pm{}$0.2 \\
        &         Laplacian E. & {\bf73.6}${}\pm{}$0.0 & {\bf86.7}${}\pm{}$0.0 & {81.4}${}\pm{}$0.0 & {\bf92.8}${}\pm{}$0.0 & {85.5}${}\pm{}$0.0 & {\bf97.0}${}\pm{}$0.0 & {40.0}${}\pm{}$0.1 & {21.3}${}\pm{}$0.1 \\
        \midrule
        \multirow{6}{*}{2}
        &         {\bf\color{gne}graph NE} & {\bf63.2}${}\pm{}$2.7 & {\bf66.9}${}\pm{}$1.2 & {62.8}${}\pm{}$2.0 & {72.5}${}\pm{}$4.8 & {\bf70.9}${}\pm{}$0.7 & {\bf96.0}${}\pm{}$0.1 & {\bf48.4}${}\pm{}$0.3 & {\bf18.9}${}\pm{}$0.6 \\
        &         SGtSNEpi & {52.7}${}\pm{}$5.2 & {64.0}${}\pm{}$4.2 & {64.1}${}\pm{}$0.5 & {\bf79.2}${}\pm{}$5.8 & {68.3}${}\pm{}$2.8 & {93.3}${}\pm{}$0.7 & {42.6}${}\pm{}$1.4 & {10.2}${}\pm{}$1.9 \\
        &         DRGraph & {59.1}${}\pm{}$1.8 & {64.8}${}\pm{}$2.8 & {67.4}${}\pm{}$0.2 & {72.5}${}\pm{}$5.2 & {68.8}${}\pm{}$1.5 & {94.2}${}\pm{}$0.1 & {\bf47.6}${}\pm{}$0.6 & {\bf18.5}${}\pm{}$0.3 \\
        &         ForceAtlas2 & {\bf62.4}${}\pm{}$0.2 & {\bf67.3}${}\pm{}$0.0 & {\bf71.6}${}\pm{}$0.0 & {71.4}${}\pm{}$0.4 & {67.3}${}\pm{}$0.2 & {73.1}${}\pm{}$0.0 & {45.4}${}\pm{}$0.8 & {\bf18.2}${}\pm{}$0.3 \\
        &         $t$-FDP & {50.5}${}\pm{}$1.7 & {60.3}${}\pm{}$0.2 & {57.5}${}\pm{}$0.8 & {66.0}${}\pm{}$0.4 & {60.6}${}\pm{}$0.5 & {69.1}${}\pm{}$0.2 & {26.5}${}\pm{}$0.5 & {8.0}${}\pm{}$1.1 \\
        &         Laplacian E. & {47.6}${}\pm{}$0.0 & {47.2}${}\pm{}$0.0 & {57.1}${}\pm{}$0.0 & {60.4}${}\pm{}$0.0 & {47.3}${}\pm{}$0.0 & {69.4}${}\pm{}$0.0 & {15.6}${}\pm{}$0.0 & {7.1}${}\pm{}$1.4 \\
        \bottomrule
    \end{tabular}
\end{table*}
\begin{table*}[ht]
    \caption{\label{tab:lpred}Area under the link prediction ROC curve. The same setup as in \cref{tab:knn}. See Appendix~\ref{sec:lpred}.}
    \vskip0.075in
    \scriptsize\centering\setlength{\tabcolsep}{4pt}
    \begin{tabular}{clrrrrrrrr}
        \toprule
        $d$ &
            \vadjust{}\hfill{}Method\hfill\vadjust{} &
            \vadjust{}\hfill{}Citeseer\hfill\vadjust{} &
            \vadjust{}\hfill{}Cora\hfill\vadjust{} &
            \vadjust{}\hfill{}PubMed\hfill\vadjust{} &
            \vadjust{}\hfill{}Photo\hfill\vadjust{} &
            \vadjust{}\hfill{}Computer\hfill\vadjust{} &
            \vadjust{}\hfill{}MNIST\hfill\vadjust{} &
            \vadjust{}\hfill{}arXiv\hfill\vadjust{} &
            \vadjust{}\hfill{}MAG\hfill\vadjust{} \\

        \midrule
        \multirow{6}{*}{128}
        &         {\bf\color{gne}graph NE} & {\bf100.0}${}\pm{}$0.0 & {\bf100.0}${}\pm{}$0.0 & {\bf100.0}${}\pm{}$0.0 & {\bf99.7}${}\pm{}$0.0 & {\bf99.4}${}\pm{}$0.0 & {\bf100.0}${}\pm{}$0.0 & {\bf100.0}${}\pm{}$0.0 & {\bf100.0}${}\pm{}$0.0 \\
        &         {\bf\color{gne}graph NE$\tau$} & {\bf100.0}${}\pm{}$0.0 & {\bf100.0}${}\pm{}$0.0 & {\bf100.0}${}\pm{}$0.0 & {\bf99.6}${}\pm{}$0.0 & {\bf99.3}${}\pm{}$0.0 & {\bf100.0}${}\pm{}$0.0 & {\bf100.0}${}\pm{}$0.0 & {\bf100.0}${}\pm{}$0.0 \\
        &         CNE, $\tau=0.5$ & {\bf99.7}${}\pm{}$0.0 & {\bf99.7}${}\pm{}$0.0 & {\bf99.8}${}\pm{}$0.0 & {97.5}${}\pm{}$0.0 & {96.2}${}\pm{}$0.0 & {\bf99.8}${}\pm{}$0.0 & {98.7}${}\pm{}$0.0 & {\bf99.7}${}\pm{}$0.0 \\
        &         DeepWalk & {\bf99.8}${}\pm{}$0.0 & {\bf99.8}${}\pm{}$0.0 & {\bf99.9}${}\pm{}$0.0 & {98.1}${}\pm{}$0.1 & {97.2}${}\pm{}$0.1 & {\bf100.0}${}\pm{}$0.0 & {\bf99.5}${}\pm{}$0.0 & {\bf100.0}${}\pm{}$0.0 \\
        &         node2vec & {91.6}${}\pm{}$0.3 & {88.3}${}\pm{}$0.2 & {84.3}${}\pm{}$0.3 & {83.3}${}\pm{}$0.2 & {71.9}${}\pm{}$0.2 & {\bf99.9}${}\pm{}$0.0 & {66.3}${}\pm{}$0.2 & {93.7}${}\pm{}$0.1 \\
        &         Laplacian E. & {\bf99.8}${}\pm{}$0.0 & {\bf99.7}${}\pm{}$0.0 & {98.9}${}\pm{}$0.0 & {97.8}${}\pm{}$0.0 & {96.0}${}\pm{}$0.0 & {\bf99.8}${}\pm{}$0.0 & {95.3}${}\pm{}$0.0 & {\bf99.2}${}\pm{}$0.0 \\
        \midrule
        \multirow{6}{*}{2}
        &         {\bf\color{gne}graph NE} & {\bf98.2}${}\pm{}$0.1 & {96.6}${}\pm{}$0.2 & {96.3}${}\pm{}$0.3 & {\bf96.4}${}\pm{}$0.1 & {\bf94.1}${}\pm{}$0.0 & {\bf98.4}${}\pm{}$0.0 & {95.7}${}\pm{}$0.2 & {\bf98.4}${}\pm{}$0.0 \\
        &         SGtSNEpi & {97.6}${}\pm{}$0.2 & {95.5}${}\pm{}$0.2 & {96.0}${}\pm{}$0.3 & {\bf96.1}${}\pm{}$0.1 & {\bf93.8}${}\pm{}$0.2 & {\bf98.2}${}\pm{}$0.1 & {95.3}${}\pm{}$0.1 & {96.4}${}\pm{}$0.1 \\
        &         DRGraph & {\bf98.9}${}\pm{}$0.0 & {\bf97.6}${}\pm{}$0.1 & {\bf97.6}${}\pm{}$0.0 & {\bf96.4}${}\pm{}$0.2 & {\bf94.5}${}\pm{}$0.1 & {\bf98.7}${}\pm{}$0.0 & {\bf97.2}${}\pm{}$0.1 & {\bf99.0}${}\pm{}$0.0 \\
        &         ForceAtlas2 & {\bf98.9}${}\pm{}$0.1 & {\bf97.7}${}\pm{}$0.0 & {\bf97.6}${}\pm{}$0.0 & {\bf96.4}${}\pm{}$0.0 & {\bf94.7}${}\pm{}$0.0 & {\bf98.6}${}\pm{}$0.0 & {\bf97.2}${}\pm{}$0.1 & {\bf98.8}${}\pm{}$0.1 \\
        &         $t$-FDP & {97.8}${}\pm{}$0.0 & {96.1}${}\pm{}$0.1 & {93.7}${}\pm{}$0.3 & {\bf96.1}${}\pm{}$0.1 & {\bf94.3}${}\pm{}$0.2 & {\bf98.9}${}\pm{}$0.0 & {88.8}${}\pm{}$0.5 & {92.8}${}\pm{}$0.6 \\
        &         Laplacian E. & {95.7}${}\pm{}$0.0 & {88.9}${}\pm{}$0.0 & {91.9}${}\pm{}$0.0 & {89.8}${}\pm{}$0.0 & {86.7}${}\pm{}$0.0 & {96.7}${}\pm{}$0.0 & {89.7}${}\pm{}$0.3 & {93.4}${}\pm{}$1.4 \\
        \bottomrule
    \end{tabular}

    \caption{\label{tab:spcorr}Spearman correlation between the shortest-path distances and the embedding distances (Euclidean for 2D embeddings, cosine for 128D embeddings). The same setup as in \cref{tab:recall-full}.}
    \vskip0.075in
    \scriptsize\centering\setlength{\tabcolsep}{5pt}
    \begin{tabular}{clrrrrrrrr}
        \toprule
        $d$                                        &
        \vadjust{}\hfill{}Method\hfill\vadjust{}   &
        \vadjust{}\hfill{}Citeseer\hfill\vadjust{} &
        \vadjust{}\hfill{}Cora\hfill\vadjust{}     &
        \vadjust{}\hfill{}PubMed\hfill\vadjust{}   &
        \vadjust{}\hfill{}Photo\hfill\vadjust{}    &
        \vadjust{}\hfill{}Computer\hfill\vadjust{} &
        \vadjust{}\hfill{}MNIST\hfill\vadjust{}    &
        \vadjust{}\hfill{}arXiv\hfill\vadjust{}    &
        \vadjust{}\hfill{}MAG\hfill\vadjust{}
                                                                                                              \\
        \midrule
        \multirow{6}{*}{128}
       & {\bf\color{gne}graph NE}       & {\bf67.4}${}\pm{}$1.0       & {\bf81.6}${}\pm{}$0.1 & {\bf71.4}${}\pm{}$0.3 & {\bf82.4}${}\pm{}$0.3 & {\bf75.6}${}\pm{}$0.0 & {\bf54.2}${}\pm{}$0.6    & {\bf60.7}${}\pm{}$0.6    & {7.3}${}\pm{}$0.8     \\
       & {\bf\color{gne}graph NE$\tau$} & {48.4}${}\pm{}$0.2          & {62.6}${}\pm{}$0.3    & {59.2}${}\pm{}$1.2    & {75.1}${}\pm{}$0.1    & {70.5}${}\pm{}$0.1    & {35.0}${}\pm{}$1.0    & {46.8}${}\pm{}$2.1    & {12.6}${}\pm{}$0.3    \\
       & CNE, $\tau=0.5$                & {33.5}${}\pm{}$0.0          & {35.8}${}\pm{}$0.1    & {32.9}${}\pm{}$0.2    & {39.1}${}\pm{}$0.0    & {32.0}${}\pm{}$0.0    & {50.8}${}\pm{}$0.0    & {7.8}${}\pm{}$0.4     & {21.1}${}\pm{}$0.3    \\
       & DeepWalk                       & {31.1}${}\pm{}$0.3          & {25.9}${}\pm{}$0.6    & {17.0}${}\pm{}$0.9    & {48.1}${}\pm{}$0.4    & {34.6}${}\pm{}$0.8    & {48.3}${}\pm{}$0.9    & {$-$6.0}${}\pm{}$0.5    & {9.7}${}\pm{}$1.8     \\
       & node2vec                       & {27.1}${}\pm{}$1.3          & {27.5}${}\pm{}$0.5    & {23.6}${}\pm{}$0.8    & {47.9}${}\pm{}$0.3    & {42.7}${}\pm{}$0.3    & {35.1}${}\pm{}$1.5    & {17.4}${}\pm{}$3.0    & {15.5}${}\pm{}$1.8    \\
       & Laplacian E.                   & {29.6}${}\pm{}$0.0          & {26.1}${}\pm{}$0.0    & {37.7}${}\pm{}$0.0    & {25.8}${}\pm{}$0.0    & {30.0}${}\pm{}$0.0    & {26.8}${}\pm{}$0.0    & {44.8}${}\pm{}$0.2    & {\bf37.5}${}\pm{}$0.4    \\
       \midrule
        \multirow{6}{*}{2}
       & {\bf\color{gne}graph NE}       & {56.9}${}\pm{}$1.5          & {47.6}${}\pm{}$3.2    & {35.0}${}\pm{}$2.6    & {58.3}${}\pm{}$1.0    & {47.2}${}\pm{}$2.0    & {41.6}${}\pm{}$0.3    & {32.9}${}\pm{}$0.9    & {34.0}${}\pm{}$2.4    \\
       & SGtSNEpi                       & {46.5}${}\pm{}$6.1          & {40.0}${}\pm{}$2.1    & {33.8}${}\pm{}$1.7    & {52.5}${}\pm{}$2.7    & {44.1}${}\pm{}$1.7    & {37.8}${}\pm{}$5.3    & {33.8}${}\pm{}$3.0    & {29.9}${}\pm{}$4.1    \\
       & DRGraph                        & {61.8}${}\pm{}$3.2          & {65.1}${}\pm{}$0.9    & {41.3}${}\pm{}$1.2    & {64.9}${}\pm{}$1.5    & {49.5}${}\pm{}$1.9    & {51.3}${}\pm{}$2.9    & {34.6}${}\pm{}$1.0    & {31.0}${}\pm{}$1.0    \\
       & ForceAtlas2                    & {\bf65.3}${}\pm{}$0.2       & {\bf71.6}${}\pm{}$0.1 & {51.0}${}\pm{}$0.0    & {\bf69.7}${}\pm{}$0.2 & {54.9}${}\pm{}$0.0    & {\bf58.2}${}\pm{}$0.0 & {40.2}${}\pm{}$0.7    & {23.4}${}\pm{}$0.5    \\
       & $t$-FDP                        & {\bf65.7}${}\pm{}$0.1       & {\bf71.1}${}\pm{}$0.1 & {\bf63.6}${}\pm{}$0.0 & {64.7}${}\pm{}$0.4    & {\bf63.8}${}\pm{}$0.5 & {53.5}${}\pm{}$0.0    & {\bf56.9}${}\pm{}$1.7 & {\bf53.6}${}\pm{}$1.3 \\
       & Laplacian E.                   & {45.5}${}\pm{}$0.0          & {50.7}${}\pm{}$0.0    & {21.3}${}\pm{}$0.0    & {55.4}${}\pm{}$0.0    & {36.2}${}\pm{}$0.0    & {52.2}${}\pm{}$0.0    & {35.7}${}\pm{}$1.5    & {20.7}${}\pm{}$0.3    \\
        \bottomrule
\end{tabular}

\caption{\label{tab:jrecall}Top-k ($k=10$) 2-hop neighbor recall. The same setup as in \cref{tab:recall-full}. See \cref{sec:jrecall}.}
    \vskip0.075in
    \scriptsize\centering\setlength{\tabcolsep}{5pt}
    \begin{tabular}{clrrrrrrrr}
        \toprule
        $d$                                        &
        \vadjust{}\hfill{}Method\hfill\vadjust{}   &
        \vadjust{}\hfill{}Citeseer\hfill\vadjust{} &
        \vadjust{}\hfill{}Cora\hfill\vadjust{}     &
        \vadjust{}\hfill{}PubMed\hfill\vadjust{}   &
        \vadjust{}\hfill{}Photo\hfill\vadjust{}    &
        \vadjust{}\hfill{}Computer\hfill\vadjust{} &
        \vadjust{}\hfill{}MNIST\hfill\vadjust{}    &
        \vadjust{}\hfill{}arXiv\hfill\vadjust{}    &
        \vadjust{}\hfill{}MAG\hfill\vadjust{}
                                                                                                              \\
        \midrule
        \multirow{6}{*}{128}
        &         {\bf\color{gne}graph NE} & {\bf44.5}${}\pm{}$0.0 & {\bf46.3}${}\pm{}$0.0 & {\bf35.9}${}\pm{}$0.0 & {27.2}${}\pm{}$0.1 & {\bf25.9}${}\pm{}$0.1 & {\bf57.8}${}\pm{}$0.1 & {\bf30.3}${}\pm{}$0.1 & {\bf30.9}${}\pm{}$0.2 \\
       &         {\bf\color{gne}graph NE$\tau$} & {\bf44.3}${}\pm{}$0.0 & {\bf46.1}${}\pm{}$0.1 & {\bf35.4}${}\pm{}$0.1 & {26.7}${}\pm{}$0.0 & {\bf25.6}${}\pm{}$0.0 & {\bf57.7}${}\pm{}$0.1 & {\bf30.2}${}\pm{}$0.1 & {29.6}${}\pm{}$0.2 \\
        &         CNE, $\tau=0.5$ & {38.0}${}\pm{}$0.0 & {38.8}${}\pm{}$0.1 & {30.5}${}\pm{}$0.1 & {21.4}${}\pm{}$0.0 & {20.0}${}\pm{}$0.0 & {37.0}${}\pm{}$0.0 & {22.3}${}\pm{}$0.1 & {22.3}${}\pm{}$0.1 \\
        &         DeepWalk & {39.7}${}\pm{}$0.1 & {40.2}${}\pm{}$0.3 & {32.8}${}\pm{}$0.3 & {25.4}${}\pm{}$0.1 & {22.0}${}\pm{}$0.2 & {53.1}${}\pm{}$0.6 & {25.9}${}\pm{}$0.1 & {30.7}${}\pm{}$0.2 \\
        &         node2vec & {41.8}${}\pm{}$0.1 & {43.7}${}\pm{}$0.1 & {33.4}${}\pm{}$0.1 & {\bf30.9}${}\pm{}$0.1 & {24.4}${}\pm{}$0.1 & {36.1}${}\pm{}$0.3 & {20.9}${}\pm{}$0.1 & {14.2}${}\pm{}$0.1 \\
        &         Laplacian E. & {36.5}${}\pm{}$0.0 & {38.0}${}\pm{}$0.0 & {26.5}${}\pm{}$0.0 & {23.1}${}\pm{}$0.0 & {19.8}${}\pm{}$0.0 & {36.5}${}\pm{}$0.0 & {20.4}${}\pm{}$0.0 & {27.0}${}\pm{}$0.2 \\
       \midrule
        \multirow{6}{*}{2}
        &         {\bf\color{gne}graph NE} & {\bf35.3}${}\pm{}$0.1 & {\bf35.0}${}\pm{}$0.1 & {\bf28.2}${}\pm{}$0.1 & {\bf24.0}${}\pm{}$0.1 & {\bf21.8}${}\pm{}$0.2 & {\bf39.5}${}\pm{}$0.2 & {\bf22.0}${}\pm{}$0.0 & {\bf20.2}${}\pm{}$0.3 \\
        &         SGtSNEpi & {34.1}${}\pm{}$0.1 & {33.0}${}\pm{}$0.4 & {\bf28.7}${}\pm{}$0.2 & {\bf23.8}${}\pm{}$0.2 & {\bf21.9}${}\pm{}$0.1 & {24.6}${}\pm{}$0.1 & {14.2}${}\pm{}$0.2 & {5.2}${}\pm{}$0.1 \\
        &         DRGraph & {29.2}${}\pm{}$0.5 & {24.1}${}\pm{}$0.5 & {11.0}${}\pm{}$0.2 & {9.8}${}\pm{}$0.2 & {6.1}${}\pm{}$0.2 & {7.1}${}\pm{}$0.1 & {3.8}${}\pm{}$0.1 & {2.1}${}\pm{}$0.2 \\
        &         ForceAtlas2 & {25.3}${}\pm{}$0.0 & {20.1}${}\pm{}$0.1 & {14.0}${}\pm{}$0.1 & {11.3}${}\pm{}$0.1 & {5.8}${}\pm{}$0.1 & {4.8}${}\pm{}$0.0 & {3.2}${}\pm{}$0.1 & {1.4}${}\pm{}$0.1 \\
        &         $t$-FDP & {17.3}${}\pm{}$0.0 & {13.8}${}\pm{}$0.2 & {3.8}${}\pm{}$0.1 & {11.6}${}\pm{}$0.2 & {5.1}${}\pm{}$0.1 & {11.5}${}\pm{}$0.2 & {0.6}${}\pm{}$0.0 & {0.2}${}\pm{}$0.0 \\
        &         Laplacian E. & {18.9}${}\pm{}$0.0 & {14.7}${}\pm{}$0.0 & {13.0}${}\pm{}$0.0 & {8.4}${}\pm{}$0.0 & {3.5}${}\pm{}$0.0 & {3.2}${}\pm{}$0.0 & {1.3}${}\pm{}$0.2 & {0.5}${}\pm{}$0.1 \\
        \bottomrule
\end{tabular}
\end{table*}

\begin{table*}
    \caption{\label{tab:tau}Learned temperature $\tau$ for the graph NE$\tau$ variant in Tables~\ref{tab:recall-full}--\ref{tab:spcorr}. Means across three training runs reported. Standard deviations were alsways below 0.0005.}
    \vskip0.075in
    \scriptsize\centering\setlength{\tabcolsep}{5pt}
    \begin{tabular}{clcccccccc}
        \toprule
        $d$                                        &
        \vadjust{}\hfill{}Method\hfill\vadjust{}   &
        \vadjust{}\hfill{}Citeseer\hfill\vadjust{} &
        \vadjust{}\hfill{}Cora\hfill\vadjust{}     &
        \vadjust{}\hfill{}PubMed\hfill\vadjust{}   &
        \vadjust{}\hfill{}Photo\hfill\vadjust{}    &
        \vadjust{}\hfill{}Computer\hfill\vadjust{} &
        \vadjust{}\hfill{}MNIST\hfill\vadjust{}    &
        \vadjust{}\hfill{}arXiv\hfill\vadjust{}    &
        \vadjust{}\hfill{}MAG\hfill\vadjust{}
                                                                                                       \\
        \midrule
        \multirow{6}{*}{}
       128 & {\bf\color{gne}graph NE$\tau$}       & 0.071&   0.077 & 0.070 & 0.079 & 0.076& 0.057& 0.058 & 0.042\\
        \bottomrule
\end{tabular}
\end{table*}

\begin{figure}[t]
\begin{center}    
\includegraphics[width=\linewidth]{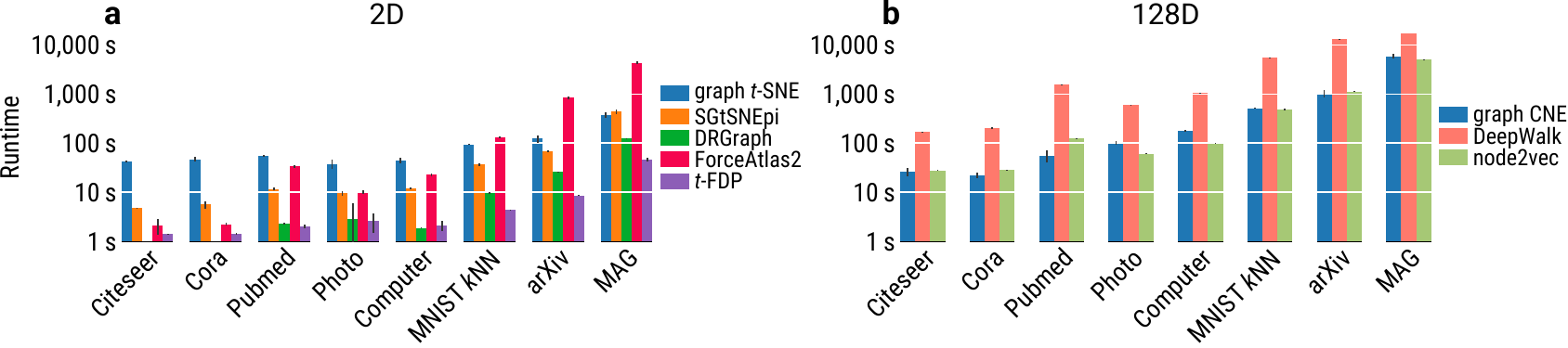}
\end{center}
\caption{Computation times. 
All computations were performed on a cluster which isolates the computing resources and removes interference between concurrent computations.  All 2D experiments require only CPUs and were ran on 8 cores of an Intel Xeon Gold 6226R.  Experiments in 128D ran on a single Nvidia 2080ti GPU card. For node2vec, this shows runtime for $p=q=1$; we ran 25 parameter combinations (\cref{fig:node2vec-pq}), so our actual runtime including hyperparameter tuning was much larger.}
\label{fig:runtime}
\end{figure}

\begin{figure}[t]
    \centering
    \includegraphics{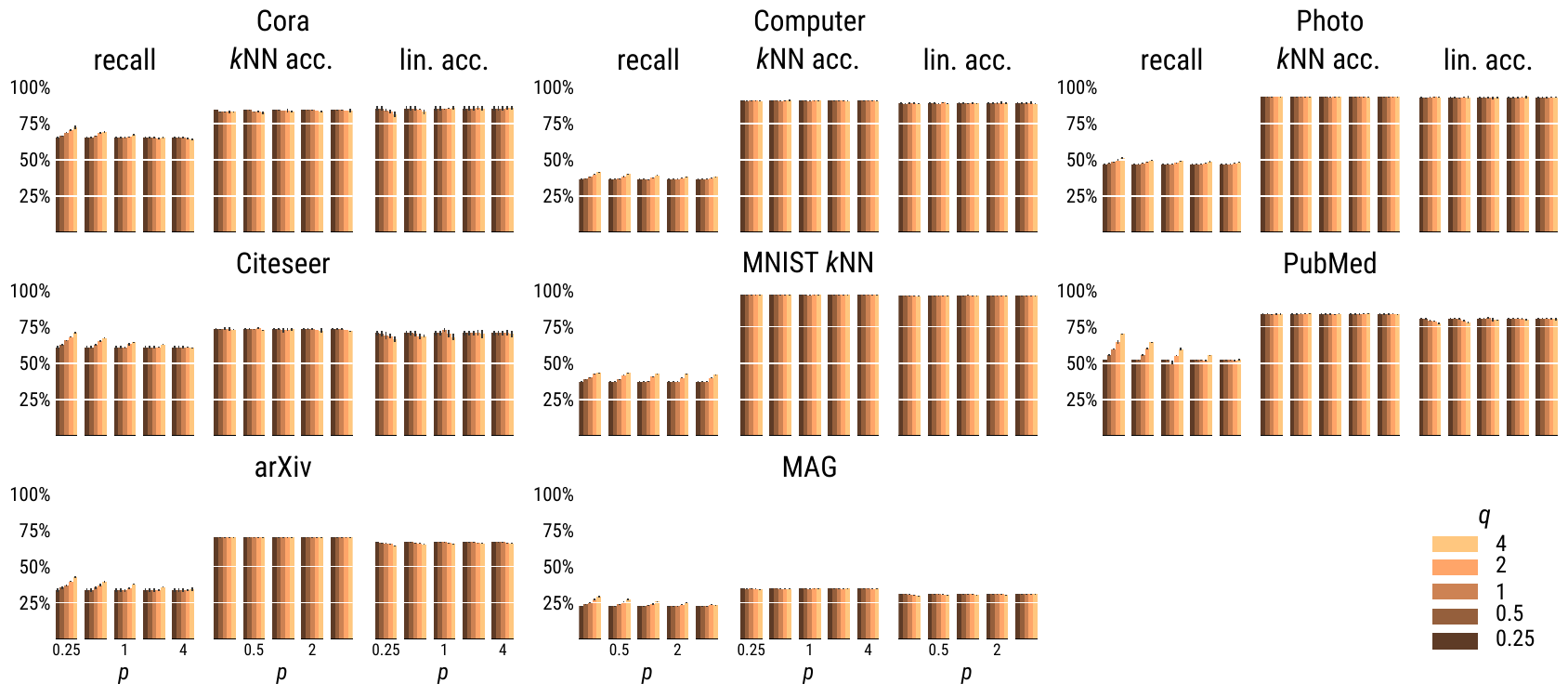}
    \caption{Evaluations for node2vec \citep{grover2016node2vec} with the hyperparameter sweep over \mbox{$p,q\in\{0.25, 0.5, 1, 2, 4\}$.} These parameter values were taken from the original node2vec paper. The highest neighbor recall was always achieved at $p=0.25$, $q=4$.  This corresponds to oversampling unseen nodes.}
    \label{fig:node2vec-pq}
\end{figure}

\begin{figure}[p]
\begin{center}
\includegraphics[width=0.91\textwidth]{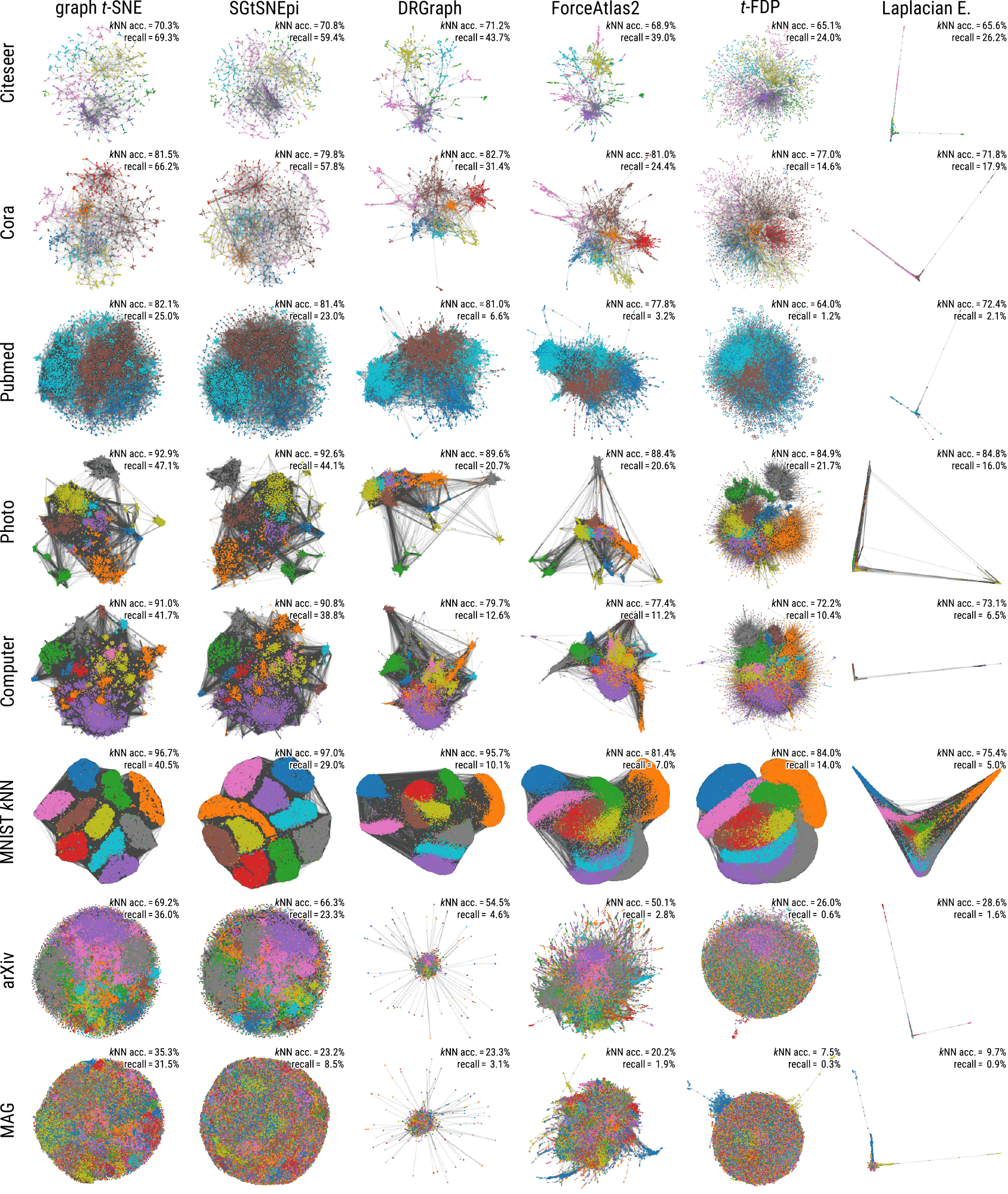}
\end{center}
\caption{Embeddings of all considered datasets obtained using our graph NE, \sgtsnepi\ \citep{pitsianis2019spaceland}, DRGraph \citep{zhu2020drgraph}, ForceAtlas2 \citep{jacomy2014forceatlas2}, $t$-FDP \citep{zhong2023force}, and Laplacian Eigenmaps \citep{belkin2003laplacian}. Embeddings in each row were aligned using orthogonal Procrustes rotation \citep{schoenemann1966generalized}.  Numbers in the top-right corner correspond to the $k$NN accuracy and the neighbor recall, respectively (as indicated in the bottom-right panel).}
\label{fig:all-embeddings}
\end{figure}

\begin{figure}[ht]
\begin{center}
\includegraphics{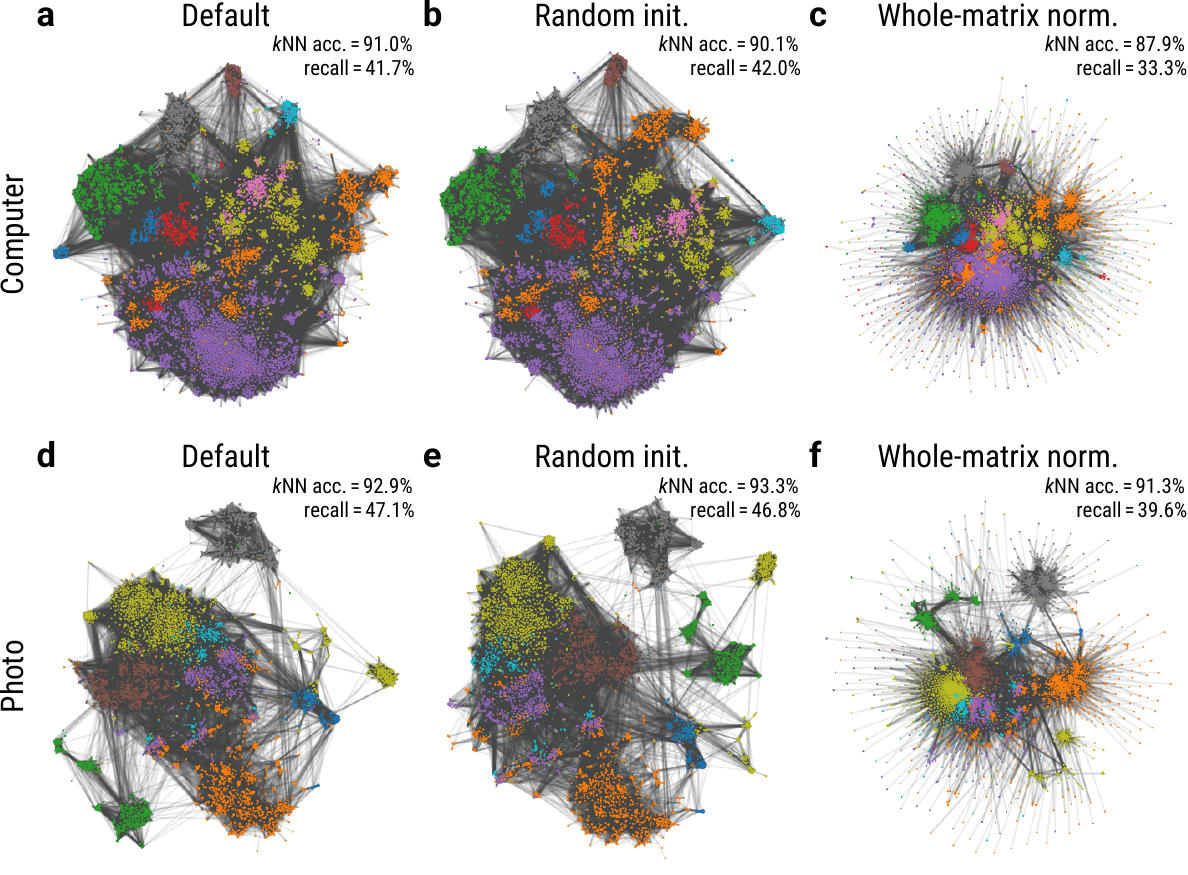}
\end{center}
\caption{The effect of initialization and normalization on 2D graph NE for the Computer and Photo datasets. \textbf{(a, d)} Default graph NE with Diffusion Maps initialization and using per-node normalization of the adjacency matrix.  \textbf{(b, e)} Graph NE using random initialization. \textbf{(c, f)} Graph NE with whole-matrix normalization. Embeddings in each row were aligned using Procrustes rotation.}
\label{fig:ablation}
\end{figure}

\end{document}